\documentclass[anon,12pt]{colt2022} % Anonymized submission
%\documentclass[final,12pt]{colt2022} % Include author names

% The following packages will be automatically loaded:
% amsmath, amssymb, natbib, graphicx, url, algorithm2e

\title[]{Sharp Asymptotics of Kernel Ridge Regression\\ Beyond the Linear Regime}
\usepackage{times}
\usepackage{comment}
\usepackage{graphicx}
\usepackage{epstopdf}
%\@ifundefined{showcaptionsetup}{}{%
% \PassOptionsToPackage{caption=false}{subfig}}
%\usepackage{subfigure}
%\makeatother
% Use \Name{Author Name} to specify the name.
% If the surname contains spaces, enclose the surname
% in braces, e.g. \Name{John {Smith Jones}} similarly
% if the name has a "von" part, e.g \Name{Jane {de Winter}}.
% If the first letter in the forenames is a diacritic
% enclose the diacritic in braces, e.g. \Name{{\'E}louise Smith}

% Two authors with the same address
% \coltauthor{\Name{Author Name1} \Email{abc@sample.com}\and
%  \Name{Author Name2} \Email{xyz@sample.com}\\
%  \addr Address}

% Three or more authors with the same address:
% \coltauthor{\Name{Author Name1} \Email{an1@sample.com}\\
%  \Name{Author Name2} \Email{an2@sample.com}\\
%  \Name{Author Name3} \Email{an3@sample.com}\\
%  \addr Address}

% Authors with different addresses:
%\coltauthor{%
% \Name{Hong Hu} \Email{honghu@g.harvard.edu}\\
% \addr John A. Paulson School of Engineering and Applied Sciences, Harvard University, Cambridge, MA 02138, United States
% \AND
% \Name{Yue M. Lu} \Email{yuelu@seas.harvard.edu}\\
% \addr John A. Paulson School of Engineering and Applied Sciences, Harvard University, Cambridge, MA 02138, United States
%}

\coltauthor{\Name{Hong Hu} \Email{honghu@g.harvard.edu}\and
\Name{Yue M. Lu} \Email{yuelu@seas.harvard.edu}\\
\addr John A. Paulson School of Engineering and Applied Sciences, Harvard University}

\newtheorem{prop}{Proposition}
\newtheorem{lem}{Lemma}

\begin{document}
\global\long\def\vct#1{\boldsymbol{#1}}%
\global\long\def\mat#1{\boldsymbol{#1}}%
\global\long\def\opvec{\text{vec}}%
\global\long\def\optr{\mbox{tr}}%
\global\long\def\opmat{\mbox{mat}}%
\global\long\def\opdiag{\mbox{diag}}%

\global\long\def\t#1{\widetilde{#1}}%
\global\long\def\h#1{\widehat{#1}}%
\global\long\def\abs#1{\left\lvert #1\right\rvert }%
\global\long\def\norm#1{\lVert#1\rVert}%

\global\long\def\inprod#1{\langle#1\rangle}%
\global\long\def\set#1{\left\{  #1\right\}  }%
\global\long\def\bydef{\overset{\text{def}}{=}}%

\global\long\def\teq#1{\overset{#1}{=}}%
\global\long\def\tleq#1{\overset{#1}{\leq}}%
\global\long\def\tgeq#1{\overset{#1}{\geq}}%
\global\long\def\tapprox#1{\overset{#1}{\approx}}%
\global\long\def\tle#1{\overset{#1}{<}}%

\global\long\def\EE{\mathbb{E}\,}%
\global\long\def\EEk{\mathbb{E}_{k}\,}%

\global\long\def\R{\mathbb{R}}%
\global\long\def\E{\mathbb{E}}%
\global\long\def\P{\mathbb{P}}%
\global\long\def\Q{\mathbb{Q}}%
\global\long\def\I{\mathbb{I}}%
\global\long\def\C{\mathbb{C}}%

\global\long\def\va{\boldsymbol{a}}%
\global\long\def\vb{\boldsymbol{b}}%
\global\long\def\vc{\boldsymbol{c}}%
\global\long\def\vd{\boldsymbol{d}}%
\global\long\def\ve{\boldsymbol{e}}%
\global\long\def\vf{\boldsymbol{f}}%
\global\long\def\vg{\boldsymbol{g}}%

\global\long\def\vh{\boldsymbol{h}}%
\global\long\def\vi{\boldsymbol{i}}%
\global\long\def\vj{\boldsymbol{j}}%
\global\long\def\vk{\boldsymbol{k}}%
\global\long\def\vl{\boldsymbol{l}}%

\global\long\def\vm{\boldsymbol{m}}%
\global\long\def\vn{\boldsymbol{n}}%
\global\long\def\vo{\boldsymbol{o}}%
\global\long\def\vp{\boldsymbol{p}}%
\global\long\def\vq{\boldsymbol{q}}%
\global\long\def\vr{\boldsymbol{r}}%

\global\long\def\vs{\boldsymbol{s}}%
\global\long\def\vt{\boldsymbol{t}}%
\global\long\def\vu{\boldsymbol{u}}%
\global\long\def\vv{\boldsymbol{v}}%
\global\long\def\vw{\boldsymbol{w}}%
\global\long\def\vx{\boldsymbol{x}}%
\global\long\def\vy{\boldsymbol{y}}%
\global\long\def\vz{\boldsymbol{z}}%

\global\long\def\vtheta{\boldsymbol{\theta}}%
\global\long\def\vxi{\boldsymbol{\xi}}%
\global\long\def\vdelta{\boldsymbol{\delta}}%
\global\long\def\veta{\vct{\eta}}%
\global\long\def\vlambda{\boldsymbol{\lambda}}%
\global\long\def\vbeta{\boldsymbol{\beta}}%
\global\long\def\vmu{\boldsymbol{\mu}}%

\global\long\def\vphi{\boldsymbol{\phi}}%
\global\long\def\vgamma{\boldsymbol{\gamma}}%
\global\long\def\vpsi{\boldsymbol{\psi}}%
\global\long\def\vrho{\boldsymbol{\rho}}%

\global\long\def\mA{\boldsymbol{A}}%
\global\long\def\mB{\boldsymbol{B}}%
\global\long\def\mC{\boldsymbol{C}}%
\global\long\def\mD{\boldsymbol{D}}%
\global\long\def\mE{\boldsymbol{E}}%
\global\long\def\mF{\boldsymbol{F}}%
\global\long\def\mG{\boldsymbol{G}}%

\global\long\def\mH{\boldsymbol{H}}%
\global\long\def\mI{\boldsymbol{I}}%
\global\long\def\mJ{\boldsymbol{J}}%
\global\long\def\mK{\boldsymbol{K}}%
\global\long\def\mL{\boldsymbol{L}}%
\global\long\def\mM{\boldsymbol{M}}%
\global\long\def\mN{\boldsymbol{N}}%

\global\long\def\mO{\boldsymbol{O}}%
\global\long\def\mP{\boldsymbol{P}}%
\global\long\def\mQ{\boldsymbol{Q}}%
\global\long\def\mR{\boldsymbol{R}}%
\global\long\def\mS{\boldsymbol{S}}%
\global\long\def\mT{\boldsymbol{T}}%
\global\long\def\mU{\boldsymbol{U}}%

\global\long\def\mV{\boldsymbol{V}}%
\global\long\def\mW{\boldsymbol{W}}%
\global\long\def\mX{\boldsymbol{X}}%
\global\long\def\mY{\boldsymbol{Y}}%
\global\long\def\mZ{\boldsymbol{Z}}%

\global\long\def\mLa{\boldsymbol{\Lambda}}%
\global\long\def\mOm{\boldsymbol{\Omega}}%
\global\long\def\mSig{\boldsymbol{\Sigma}}%
\global\long\def\mDelta{\boldsymbol{\Delta}}%
\global\long\def\mPsi{\boldsymbol{\Psi}}%
\global\long\def\mGam{\boldsymbol{\Gamma}}%

\global\long\def\calS{\mathcal{S}}%
\global\long\def\calN{\mathcal{N}}%
\global\long\def\calL{\mathcal{L}}%
\global\long\def\calD{\mathcal{D}}%
\global\long\def\calV{\mathcal{V}}%
\global\long\def\calW{\mathcal{W}}%

\global\long\def\a{\alpha}%
\global\long\def\b{\beta}%
\global\long\def\m{\mu}%
\global\long\def\n{\nu}%
\global\long\def\g{\gamma}%
\global\long\def\s{\sigma}%
\global\long\def\e{\epsilon}%
\global\long\def\w{\omega}%
\global\long\def\veps{\varepsilon}%

\global\long\def\T{\intercal}%
\global\long\def\d{\text{d}}%

\global\long\def\nt{\left\lfloor nt\right\rfloor }%
\global\long\def\ns{\left\lfloor ns\right\rfloor }%
\global\long\def\textif{\text{if }}%
\global\long\def\otherwise{\text{otherwise}}%
\global\long\def\st{\text{s.t. }}%

\global\long\def\lmax{\lambda_{\max}}%
\global\long\def\lmin{\lambda_{\min}}%

\global\long\def\tvy{\t{\boldsymbol{y}}}%
\global\long\def\tc{\t c}%
\global\long\def\ttau{\t{\tau}}%
\global\long\def\tf{\t f}%
\global\long\def\th{\t h}%
\global\long\def\tq{\t q}%
\global\long\def\tz{\t z}%

\global\long\def\tva{\t{\boldsymbol{a}}}%
\global\long\def\tvw{\t{\boldsymbol{w}}}%
\global\long\def\tw{\t w}%

\global\long\def\argmin#1{\underset{#1}{\text{argmin }}}%
\global\long\def\argmax#1{\underset{#1}{\text{argmax }}}%
\global\long\def\targmin#1{\text{argmin}_{#1}}%
\global\long\def\targmax#1{\text{argmax}_{#1}}%

\global\long\def\asconv{\overset{a.s.}{\rightarrow}}%
\global\long\def\pconv{\overset{\mathbb{P}}{\rightarrow}}%
\global\long\def\wconv{\overset{\calL}{\rightarrow}}%

\global\long\def\iid{\overset{i.i.d.}{\sim}}%

\global\long\def\dmtx{\mA}%
\global\long\def\dmtxi{A}%
\global\long\def\kmtx{\mK}%
\global\long\def\inputmtx{\mX}%
\global\long\def\shmtx{\mY}%
\global\long\def\rsmtx{\mR}%
\global\long\def\mdmtx{\mD}%

\global\long\def\sgl{\vxi}%
\global\long\def\noise{\boldsymbol{z}}%
\global\long\def\sgli{\xi}%
\global\long\def\est{\vb}%
\global\long\def\esti{b}%
\global\long\def\res{\vr}%
\global\long\def\err{\vv}%
\global\long\def\obser{\boldsymbol{y}}%
\global\long\def\invec{\vx}%
\global\long\def\tevec{\vg}%

\global\long\def\noisei{z}%
\global\long\def\sol{\widehat{\vw}}%
\global\long\def\soli{\hat{w}}%
\global\long\def\Soli{w}%

\global\long\def\kcoeff#1{\mu_{#1}}%
\global\long\def\tkcoeff#1{\tilde{\mu}_{#1}}%
\global\long\def\tcoeff#1{\alpha_{#1}}%
\global\long\def\usdim#1{N_{#1}}%
\global\long\def\spmeasure#1{\tau_{#1}}%

\global\long\def\kernalfn{f}%
\global\long\def\Kernalfn{K}%
\global\long\def\teacherfn{g}%
\global\long\def\usp#1{q_{#1}}%
\global\long\def\tusp#1{\tilde{q}_{#1}}%
\global\long\def\dusp#1#2{q_{#1,#2}}%

\global\long\def\regressfn{h}%
\global\long\def\hermitefn{H}%
\global\long\def\varnoise{\sigma_{\noisei}}%
\global\long\def\equisamp{\theta}%

\global\long\def\unifdist{\text{Unif}}%
\global\long\def\indicatorfn{\mathbb{I}}%
\global\long\def\sgn{\text{sign}}%
\global\long\def\diag{\mbox{diag}}%
\global\long\def\var{\text{Var}}%
\global\long\def\snr{\text{SNR }}%
\global\long\def\tr{\mbox{Tr }}%
\global\long\def\Tr{\mbox{Tr }}%

\global\long\def\Rpos{\R_{>0}}%
\global\long\def\Rnonneg{\R_{\geq0}}%

\global\long\def\msc{K}%
\global\long\def\stodom{\lesssim}%
\global\long\def\symset{\mathcal{S}}%
\global\long\def\dsphere#1{\mathcal{S}^{#1-1}(\sqrt{#1})}%

\maketitle

\begin{abstract}%
The generalization performance of kernel ridge regression (KRR) exhibits a multi-phased pattern that crucially depends on the scaling relationship between the sample size $n$ and the underlying dimension $d$. This phenomenon is due to the fact that KRR sequentially learns
functions of increasing complexity as the sample size increases;  when $d^{k-1}\ll n\ll d^{k}$, only polynomials with degree less than $k$ are learned. In this paper, we present sharp asymptotic characterization of the performance of KRR at the critical transition regions with $n \asymp d^k$, for $k\in\mathbb{Z}^{+}$. Our asymptotic characterization provides a precise picture of the whole learning process and clarifies the impact of various parameters (including the choice of the kernel function) on the generalization performance. In particular, we show that the learning curves of KRR can have a delicate ``double descent'' behavior due to specific bias-variance trade-offs at different polynomial scaling regimes.
\end{abstract}

\begin{keywords}%
Kernel ridge regression, kernel method, double-descent, sharp asymptotics.
\end{keywords}

% Acknowledgments---Will not appear in anonymized version
%\acks{}

%!TEX root = colt2022-KRR.tex

\section{Introduction}

Consider kernel ridge regression (KRR), where we seek to learn a function $\regressfn:\R^{d}\mapsto\R$ from a reproducing kernel Hilbert space (RKHS) associated with a positive semi-definite kernel $\Kernalfn(\cdot,\cdot)$ by solving the following
optimization problem:
\begin{equation}
\hat{\regressfn}=\argmin{\regressfn}\sum_{i=1}^{n}[y_{i}-h(\invec_{i})]^{2}+\lambda\|\regressfn\|_{\Kernalfn}^{2}.\label{eq:kernel_1}
\end{equation}
Here, $\{\invec_{i},y_{i}\}_{i=1}^{n}$ is a collection of training samples, $\|\cdot\|_{\Kernalfn}$ is the RKHS norm and $\lambda>0$ is the regularization parameter.
%In many supervised learning problems, the goal is to fit a function
%$\regressfn:\R^{d}\mapsto\R$ from i.i.d. training samples: $\{\invec_{i},y_{i}\}_{i=1}^{n}$,
%which are generated from a probability distribution $\mathcal{P}$
%on $\R^{d}\times\R$. is a widely applied
%method for this task, which learns $\regressfn$ by solving the following
%optimization problem:
%\begin{equation}
%\hat{\regressfn}=\argmin{\regressfn}\sum_{i=1}^{n}[y_{i}-h(\invec_{i})]^{2}+\lambda\|\regressfn\|_{\Kernalfn}^{2},\label{eq:kernel_1}
%\end{equation}
%where $\|\regressfn\|_{\Kernalfn}$ is the norm of the reproducing
%kernel Hilbert space (RKHS) associated with the positive semi-definite
%(PSD) kernel function $\Kernalfn(\cdot,\cdot)$ and $\lambda>0$ is
%the regularization parameter.
The performance
of KRR can be characterized by the \emph{training error}
\[
\mathcal{E}_{\text{train}}=\frac{1}{n}\Big[\sum_{i=1}^{n}\big(y_{i}-\hat{\regressfn}(\invec_{i})\big)^{2}+\lambda\|\hat{\regressfn}\|_{\Kernalfn}^{2}\Big]
\]
and the \emph{test error}
\[
\mathcal{E}_{\text{test}}=\E_{\text{new}}[\E(y_{\text{new}}\mid\invec_{\text{new}})-\hat{\regressfn}(\invec_{\text{new}})]^{2},
\]
where $(\invec_{\text{new}},y_{\text{new}})\sim\mathcal{P}$ denotes an independent
test sample, and $\E_{\text{new}}$ denotes the expectation with respect to $(\invec_{\text{new}},y_{\text{new}})$ while keeping the training samples $\{\invec_{i},y_{i}\}_{i=1}^{n}$ fixed.

Kernel ridge regression is a classical method for supervised learning \citep{scholkopf2002learning}. Due to its connection
to modern overparameterized neural networks \citep{neal1996priors,williams1996computing,daniely2016toward,jacot2018neural,belkin2018understand,du2019gradient},
there has been a strong resurgence of interest in studying the performance of KRR, especially in various high-dimensional settings. See, \emph{e.g.}, \cite{rakhlin2019consistency,liang2020multiple,liang2020just,bordelon2020spectrum,canatar2021spectral,ghorbani2021linearized,mei2021generalization}.

\begin{figure}[t]
\begin{centering}
\includegraphics[width=0.6\textwidth]{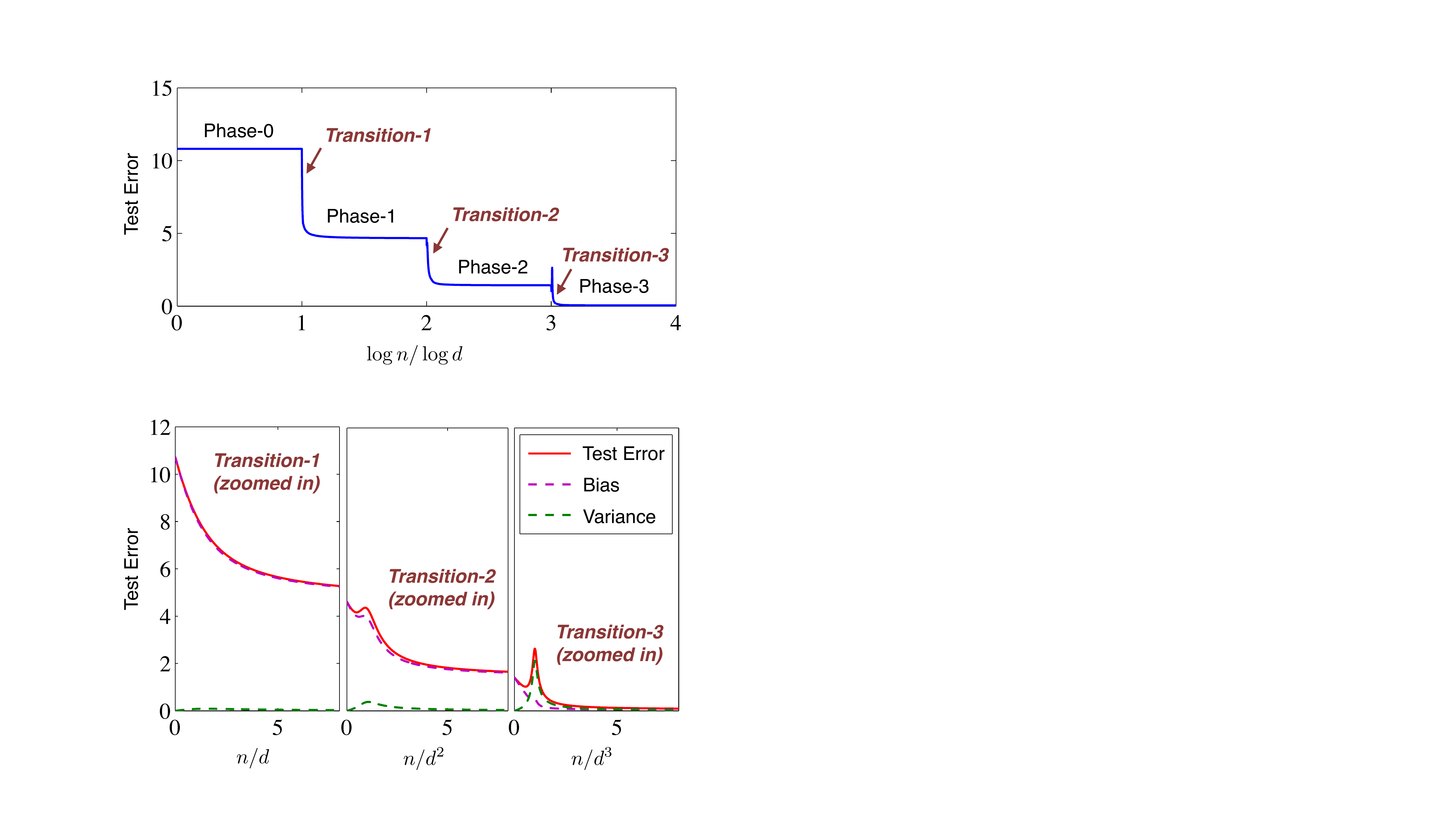}
\par\end{centering}
\caption{\label{fig:Illustration-of-hierarchical}Illustration of the hierarchical
learning process of kernel ridge regression. Top figure: test error as a function of $\log n / \log d$, where $n$ is the sample size and $d$ is the dimension. The test error appears to remain unchanged when $d^{k-1}\ll n\ll d^{k}$, for $k\in\mathbb{Z}^{+}$, while drastic transitions
occur at $n\asymp d^{k}$. Bottom figures: the zoomed-in views of the test error within each transition region, corresponding to $n \asymp d^k$ for $k = 1, 2, 3$. The red curves show the asymptotic predictions obtained in this work. These learning curves exhibit delicate non-monotonic behavior, due to bias-variance trade-offs at different polynomial scaling regimes.}
\end{figure}

One intriguing phenomenon revealed by several recent works \citep{liang2020multiple,ghorbani2021linearized,mei2021generalization} is that the generalization performance of KRR exhibits a hierarchical and multi-phased pattern that crucially depends on the scaling relationship between the sample size $n$ and the underlying dimension $d$. We illustrate this phenomenon in Fig. \ref{fig:Illustration-of-hierarchical} (top part), where we plot the test error $\mathcal{E}_{\text{test}}$ against the ratio $\log n / \log d$. The test error can be clearly partitioned into several consecutive ``stationary'' phases that are separated by more drastic transitions in between. More precisely, $\mathcal{E}_{\text{test}}$
appears to remain unchanged when $d^{k-1}\ll n\ll d^{k}$, for $k\in\mathbb{Z}^{+}$, while transitions
occur at $n\asymp d^{k}$. An explanation of this phenomenon was given in \cite{ghorbani2021linearized}: It is shown that, when $d^{k-1}\ll n\ll d^{k}$, $\mathcal{E}_{\text{test}}$
is approximately equal to the approximation error of the function $h$ by all the polynomials
with degree less than $k$. This means that KRR sequentially learns
functions of increasing complexity as the sample size increases;  when $d^{k-1}\ll n\ll d^{k}$, only polynomials with degree less than $k$ are learned.

%Among them, a recent line of work \cite{liang2020multiple,ghorbani2021linearized,mei2021generalization}
%investigated its generalization properties in the polynomial scaling
%regime: $d^{k-1+\delta}\leq n\leq d^{k-\delta}$, for $k\in\mathbb{Z}^{+}$
%and small $\delta>0$. In particular, an intriguing phenomenon revealed
%by these works is that the generalization performance exhibits a hierarchical
%pattern. An illustration of this behavior is given in Fig. \ref{fig:Illustration-of-hierarchical},
%where we plot $\mathcal{E}_{\text{test}}$ as a process indexed by
%the sample size $n$. It can be seen that this process are partitioned
%into consecutive stationary phases, separated by the critical points
%$n\asymp d^{k}$, $k\in\mathbb{Z}^{+}$.

What is the performance of KRR near the critical regions, exactly where the transitions happen? This is the focus of the current paper. More precisely, we ``zoom into'' each transition region by assuming $n\asymp d^{k}$, and derive sharp asymptotics of KRR for different values of $k$. Such asymptotic characterization provides a precise picture of the whole learning process and clarifies the impact of various parameters (including the choice of the kernel function) on the generalization performance. As a preview of our results, we plot in the lower part of Fig. \ref{fig:Illustration-of-hierarchical} the theoretical predictions of $\mathcal{E}_{\text{test}}$
in the regimes $n\asymp d^{k}$, for $k=1,2,3$, for a specific choice
of the kernel function. It can be seen that the learning curves of KRR can exhibit delicate non-monotonic behavior due to bias-variance trade-offs: as the sample size $n$ increases, $\mathcal{E}_{\text{test}}$
can first increase and then decrease again after crossing certain
deterministic thresholds. Under the names of  ``double descent'' or ``multiple descent'', such phenomenon
has been observed and analyzed in various other problems and models in learning \citep{mei2019generalization,d2020double,adlam2020neural,nakkiran2021deep}.

Some of the asymptotic predictions given in the paper were first derived in \cite{bordelon2020spectrum,canatar2021spectral} (see also \cite{dietrich1999statistical} for a related earlier work), via non-rigorous statistical physics methods and a ``Gaussian equivalence conjecture'' (see Sec \ref{sec:Equivalence-Conjecture}). One of the technical contributions of this paper is to rigorously establish this conjecture, which allows us to characterize the exact performance KRR in the polynomial scaling regime.

When the current work was under review at COLT '22, we became aware of the recent paper \cite{misiakiewicz2022spectrum} that also studies the exact asymptotics of KRR in the polynomial scaling regime.
The target function considered in that paper is different from ours.
On one hand,  the expansion coefficients (\emph{i.e.}, $\tcoeff{k,d}$ in \eqref{eq:teacher_expansion} below) of low-degree components can be arbitrary, which is more general than ours; on the other hand, they require high-degree coefficients
to be \emph{independent} random variables. In comparison, we consider a target function (as detailed in Sec. \ref{sec:model}) whose coefficients are dependent.
% In terms of the bias-variance decomposition illustrated in Fig. \ref{fig:Illustration-of-hierarchical}, we decompose over the label noise (as specified in \eqref{eq:bias_hhat} and \eqref{eq:variance_hhat} below.), while \cite{misiakiewicz2022spectrum} considers a decomposition over the effective noise (=high-order terms + label noise).
In terms of the distribution of $\{\invec_i\}$, our work focuses on the uniform distribution over $d$-dimensional sphere.
In addition to this case, \cite{misiakiewicz2022spectrum} also considers the uniform distribution over the hypercube $\{-1,1\}^{d}$.

%What is less known is the critical regime: $n\asymp d^{k}$, $k\in\mathbb{Z}^{+}$,
%where the transition happens. In this regime, it is possible to study
%the exact asymptotics of KRR, which can provide us refined information
%about the overall picture of the whole learning process. Interesting
%related questions include: how $\mathcal{E}_{\text{test}}$ change
%with respect to $n$ in this critical regime and what is the role
%played by different parameters, \emph{e.g.}, the input dimension $d$,
%sample size $n$, choice of the kernel function $\Kernalfn(\cdot,\cdot)$,
%etc. Most of the existing works in this direction focus on the linear
%regime, \emph{i.e.}, $n\asymp d$ case. For example, for random feature
%kernel, \cite{mei2019generalization} rigorously proves that in the
%linear asymptotic regime, the test error can exhibit certain non-monotonic
%behavior: as sample size $n$ increases, $\mathcal{E}_{\text{test}}$
%can possibly first increase and decrease again after crossing certain
%deterministic threshold. This phenomenon could be universal beyond
%the linear regime as suggested by \cite{liang2020multiple,bordelon2020spectrum,canatar2021spectral}.
%However, these works are either non-rigorous or order-wise in nature.
%What is lacking is a rigorous understanding on the exact asymptotic
%behavior of KRR beyond the linear regime, which motivates the current
%work. Our main results rigorously establish the exact performance
%characterization of KRR in the polynomial asymptotic regime.

%\input{main_results}
\input{\string"main_results.tex\string"}
%!TEX root = colt2022-KRR.tex

\subsection{\label{sec:Equivalence-Conjecture}Gaussian Equivalence Conjecture}

A key technical insight behind our proof of Theorem \ref{thm:training_test_err} is the rigorous establishment of a so-called Gaussian equivalence conjecture. This conjecture was implicitly used (without proof) in several earlier works \citep{dietrich1999statistical,bordelon2020spectrum,canatar2021spectral} that study the generalization performance of kernel methods using non-rigorous statistical physics methods.

For simplicity, let us assume that the kernel and teacher functions are
both band-limited: there exists $L>0$ such that $\kcoeff k=\tcoeff k=0$,
for all $k>L$. First, based on the expansions (\ref{eq:kernel_expansion})
and (\ref{eq:teacher_expansion}), we can obtain an equivalent formulation
of (\ref{eq:kernel_1}) as a linear regression in the feature space
with the following feature map:
\[
\invec_{a}\mapsto\vgamma_{a}=[\underbrace{\widetilde{Y}_{01}(\invec_{a})}_{\usdim 0},\underbrace{\widetilde{Y}_{11}(\invec_{a}),\cdots,\widetilde{Y}_{1\usdim 1}(\invec_{a})}_{\usdim 1},\cdots\cdots\underbrace{\widetilde{Y}_{L1}(\invec_{a}),\cdots\widetilde{Y}_{L\usdim L}(\invec_{a})}_{\usdim L}]^{\T}
\]
where $\widetilde{Y}_{ki}(\cdot):=\sqrt{\tkcoeff{k,d}}Y_{ki}(\cdot)$.
Then (\ref{eq:kernel_1}) is equivalent to:
\begin{equation}
\hat{\vtheta}=\argmin{\vtheta}\sum_{a=1}^{n}[y_{a}-\vtheta^{\T}\vgamma_{a}]^{2}+\lambda\|\vtheta\|^{2}\label{eq:kernel_1_feature}
\end{equation}
where $y_{a}=\vbeta^{\T}\mLa\vgamma_{a}+\noisei_{a}$, with %
\begin{comment}
\begin{align*}
y_{a} & =\sum_{k=0}^{L}\frac{\tcoeff{k,d}}{\sqrt{\usdim k}}\sum_{i=1}^{\usdim k}Y_{ki}(\sgl)^{\T}Y_{ki}(\invec_{a})\\
 & =\sum_{k=0}^{L}\frac{\tcoeff{k,d}}{\sqrt{\kcoeff{k,d}}}\sum_{i=1}^{\usdim k}Y_{ki}(\sgl)^{\T}\widetilde{Y}_{ki}(\invec_{a})
\end{align*}
\end{comment}
\begin{align*}
\vbeta & =[Y_{01}(\sgl),Y_{11}(\sgl),\cdots,Y_{1\usdim 1}(\sgl),\cdots\cdots,Y_{L1}(\sgl),\cdots Y_{L\usdim L}(\sgl)]^{\T}
\end{align*}
and
\[
\mLa=\diag\Big\{\tfrac{\tcoeff{0,d}}{\sqrt{\kcoeff{0,d}}},\tfrac{\tcoeff{1,d}}{\sqrt{\kcoeff{1,d}}},\cdots,\tfrac{\tcoeff{1,d}}{\sqrt{\kcoeff{1,d}}},\cdots\cdots,\tfrac{\tcoeff{L,d}}{\sqrt{\kcoeff{L,d}}},\cdots\tfrac{\tcoeff{L,d}}{\sqrt{\kcoeff{L,d}}}\Big\}.
\]
If the entries of $\vgamma_{a}$ are independent Gaussian random variables, then we reach
the setting that has been analyzed in several recent papers \citep{dicker2016ridge,dobriban2018high,hastie2019surprises,wu2020optimal,richards2021asymptotics}.
The main challenge here is that different entries of $\vgamma_{a}$
are not independent and that there is no linear transformation that can
decouple this dependence. On the other hand, however, different entries $\vgamma_{a}$ are still \emph{uncorrelated}: recall that
$\E\widetilde{Y}_{ki}(\invec)\widetilde{Y}_{\ell j}(\invec)=\tkcoeff{k,d} \indicatorfn_{k\ell}\indicatorfn_{ij}$,
since $\{Y_{ki}(\invec)\}_{k,i}$ are orthonormal with respect to
$\spmeasure{d-1}$. Thus, the so-called Gaussian
equivalence conjecture states that the learning performance of the original
KRR problem will remain asymptotically unchanged if we replace each $\vgamma_{a}$
by a Gaussian vector $\vg_{a}$ with the same mean and covariance
matrix.

\subsection{Limitations of the Current Work}
\label{sec:limitations}

Finally, we point out several important limitations of our results. First, we have assumed that the input vectors $\invec$
are uniformly distributed over $\dsphere d$. Although this is a convenient model for theoretical analysis, the spherical symmetry of the model might impose too strong an assumption on the input data. It will be desirable to explore other more general data distributions such as those considered in \cite{liang2020just,liang2020multiple,mei2021generalization}.
Second, we have assumed that the labels $\set{y_i}$ in the training set are generated by a specific teacher-student model, which is essentially a generalized linear model. On the contrary, in \cite{liang2020just,liang2020multiple,mei2021generalization},
there is no such constraint and a generic non-parametric model for the labels $\set{y_i}$ is
considered. 
Since current proof crucially hinges on the fact that the distribution of $\invec_i$ is isotropic and that $h(\invec)$ only depends on the projection $\invec^{\T}\sgl$, handling more general function classes may require substantial changes to our current proof technique.
Finally, we only analyze the inner-product kernels here. 
It will be interesting to consider other types of kernels, \emph{e.g.}, radial kernel $\Kernalfn(\invec,\invec')=k(\|\invec-\invec'\|/\sqrt{d})$ and translation invariant kernel $\Kernalfn(\invec,\invec')=k(\invec-\invec')$. In the current setting, since $\invec_{i}\iid\spmeasure{d-1}$, it is easy to see that radial kernels can be viewed as inner-product kernels. For more general settings, we conjecture that some versions of
Gaussian equivalence may still hold. 
% In particular, the feature space point
% of view can still be useful.
The extensions to the above more general cases are left as
interesting future work. 
% Finally, our current proof only covers the cases where the sample size $n$ scales linearly, quadratically, or cubically with the dimension $d$. This limitation originates from an approximation
% argument in the proof [specifically, (\ref{eq:kernelmtx_approximation}) and (\ref{eq:kernelmtx_approximation_Kl})].
% We conjecture that the results of Theorem~\ref{thm:training_test_err} remain valid for $n \asymp d^K$, with any $K \in \mathbb{Z}^{+}$.

\section{Proof of Main Results}

\subsection{Notations}

\begin{comment}
Throughout our proof, we will consider the expansion of kernel function
$\Kernalfn$ and teacher function $\teacherfn$.
\end{comment}
Before delving into the formal proof, let us first list some notations
that will be used throughout our proof in the following sections.

For $n\in\mathbb{Z}^{+},$we denote by $[n]$ the set $\{1,2,\cdots,n\}$.
For a vector $\vx\in\R^{n}$, we use $\|\vx\|$ to denote its $\ell_{2}$
norm and for a matrix $\mX\in\R^{m\times n}$, we use $\|\mX\|$ to
denote its operator norm and $\|\mX\|_{F}$ as its Frobenius norm.

For convenience of stating some results regarding deterministic or
probabilistic upper bounds, we will adopt the following notations.
$f(d)\lesssim g(d)$ means that there exists $C>0$ such that $|f(d)|\leq C|g(d)|$
and $|f(d)|\gtrsim g(d)$ means that there exists $c>0$ such that
$|f(d)|\geq c|g(d)|$. Also for two non-negative random variables,
$X\stodom Y$ means that for any $\tau>0$ and $\veps>0$, $\P(X\leq d^{\tau}Y)\leq\veps$
for all large enough $d$.

Our proof will frequently utilize the expansions of $\Kernalfn(\cdot,\cdot)$
and $\teacherfn(\cdot)$ under spherical harmonics $\{Y_{ki}(\vx)\}_{k,i}$
and ultraspherical polynomials $\{\usp k\}_{k}$. For any vector $\va\in\R^{d}$,
$$\shmtx_{k}(\va):=[Y_{k1}(\va),Y_{k2}(\va),\cdots,Y_{k\usdim k}(\va)]^{\T}$$
and for any matrix $\mA=[\va_{1},\va_{2},\cdots,\va_{n}]^{\T}\in\R^{n\times d}$,
$\shmtx_{k}(\mA):=[\shmtx_{k}(\va_{1}),\shmtx_{k}(\va_{2}),\cdots,\shmtx_{k}(\va_{n})]^{\T}$.
In particular, for the input matrix $\inputmtx=[\invec_{1},\invec_{2},\cdots,\invec_{n}]^{\T}\in\R^{n\times d}$,
we denote $\shmtx_{k}:=\shmtx_{k}(\inputmtx)$. In the kernel expansion,
the degree-$k$ component will be denoted as $\kmtx_{k}:=\tkcoeff {k,d}\shmtx_{k}\shmtx_{k}^{\T}$
and likewise for the teacher model, we have $\tevec_{k}:=\tcoeff{k,d}\usp k\big(\frac{\inputmtx\sgl}{\sqrt{d}}\big)=\frac{\tcoeff{k,d}}{\sqrt{\usdim k}}\shmtx_{k}\shmtx_{k}(\sgl)$ and we write $\widetilde{\shmtx}_{k}(\sgl)=\frac{\tcoeff k}{\sqrt{\usdim k}}\shmtx_{k}(\sgl)$.
We also denote $\delta_{k}=n/\usdim k$ as the sampling ratio with
respect to the degree-$k$ component and $\mD_k = \sqrt{\kcoeff{k,d}\delta_k} \mI_{\usdim{k}}$.

We use the following short-hand notations for the partial sum:
$\kmtx_{\leq k}=\sum_{\ell=0}^{k}\kmtx_{k}$,
$\usdim{\leq k}=\sum_{\ell=0}^{k}\usdim{\ell}$,
$\tevec_{\leq k}=\sum_{\ell=0}^{k}\tevec_{\ell}$ and
$\obser_{\leq k}:=\tevec_{\leq k}+\noise$
and block matrix:
$\shmtx_{\leq k}=[\shmtx_{0},\cdots,\shmtx_{k}]$, ${\shmtx}_{\leq k}(\sgl)=[{\shmtx}_{0}(\sgl)^{\T},\cdots,{\shmtx}_{k}(\sgl)^{\T}]^{\T}$
and
$\mD_{\leq k} = \diag\{\mD_{0},\cdots,\mD_{k}\}$. The  quantities like $\kmtx_{>k}$ or $\shmtx_{>k}$ are defined in the same way.
% $\mathsf{M}_{\leq k}=\sum_{\ell=0}^{k}\mathsf{M}_{\ell}$
% and $\mathsf{M}_{>k}=\sum_{\ell=k+1}^{\infty}\mathsf{M}_{\ell}$ or sub-matrix: $\mathsf{Y}_{\leq k} = $,
% where $\mathsf{M}$ can be scalar, vector or matrix. For example,
% $\kmtx_{\leq k}=\sum_{\ell=0}^{k}\kmtx_{k}$ and $\usdim{\leq k}=\sum_{\ell=1}^{k}\usdim{\ell}$.
% We will also denote $\obser_{\geq k}:=\tevec_{\geq k}+\noise$.

Also since we are focusing on asymptotic results and under our main
assumptions, $\kcoeff{k,d}\to\kcoeff k$ and $\tcoeff{k,d}\to\tcoeff k$ as $d\to\infty$, we will drop the dependence of $\kcoeff{k,d}$ and $\tcoeff{k,d}$
on $d$ in our proof, when it is clear from the context.

\subsection{Proof for the Asymptotic Formula of the Training Error}\label{sec:proof_trainingerr}

We first study the asymptotics of the training error. It can be proved
\cite[Theorem 4.2]{scholkopf2002learning} that the optimal solution
of (\ref{eq:kernel_1}) is:
\begin{equation}
\hat{\regressfn}(\vx)=\sum_{i=1}^{n}\Kernalfn(\invec,\invec_{i})\soli_{i}.\label{eq:representer_theorem}
\end{equation}
Here, $\sol=(\lambda\mI+\kmtx)^{-1}\obser$ is the optimal
solution of
\begin{equation}
\min_{\vw}(\vy-\mK\vw)^{2}+\lambda\vw^{\T}\mK\vw,\label{eq:kernel_2}
\end{equation}
where $\kmtx\in\R^{n\times n}$ is the kernel matrix, with $[\kmtx]_{ij}=\Kernalfn(\invec_{i},\invec_{j})$.
Therefore, $\mathcal{E}_{\text{train}}$ has an explicit form:
\begin{align}
\mathcal{E}_{\text{train}} & =\frac{\lambda}{n}\vy^{\T}\rsmtx\vy,\label{eq:training_error}
\end{align}
where $\rsmtx=(\lambda\mI+\mK)^{-1}$ is the resolvent matrix of $\mK$.

\subsubsection{\label{subsec:training_err_special_case_proof}A Special Case}

We first present the proof for a special case. Consider the following
kernel function $\Kernalfn(\cdot,\cdot)$:
\begin{equation}
\Kernalfn(\vx,\vx')=\frac{\kcoeff{\msc,d}}{\usdim{\msc}}\sum_{i=1}^{\usdim{\msc}}Y_{\msc i}(\vx)Y_{\msc i}(\vx')\label{eq:kernel_expansion_1}
\end{equation}
and teacher function $\teacherfn(\cdot)$:
\begin{equation}
\teacherfn(x)=\tcoeff{\msc,d}\usp{\msc}(x),\label{eq:teacher_expansion_1}
\end{equation}
where $\msc\in\mathbb{Z}^{+}$ is defined as in Assumption (A.1).
Comparing (\ref{eq:kernel_expansion_1}) and (\ref{eq:teacher_expansion_1})
with (\ref{eq:kernel_expansion}) and (\ref{eq:teacher_expansion}),
we can find that (\ref{eq:kernel_expansion_1}) and (\ref{eq:teacher_expansion_1})
correspond to a special model that only retains the degree-$\msc$
component, while discarding all the low-degree and high-degree parts.
Although this may appear to be a substantial simplification of the
original model, it turns out that this simplified setting already
captures some main technical ingredients in the general proof.%
\begin{comment}
Although this may appear to a substantial simplification of the original
model, it turns out that it constitutes the main technical ingredient.
\end{comment}

We can make some simplifications utilizing the rotational invariance
of the input vectors $\{\invec_{i}\}_{i=1}^{n}$. Since $\invec_{i}\sim\spmeasure{d-1}$,
we have the following representation:
\begin{equation}
\invec_{i}=(\eta_{i},[(d-\eta_{i}^{2})/(d-1)]^{1/2}\vv_{i}^{\T})^{\T},\label{eq:xi_representation}
\end{equation}
where $\eta_{i}\sim\spmeasure{d-1,1}$, $\vv_{i}\sim\spmeasure{d-2}$
and they are independent. Also by rotational invariance, we can assume
without loss of generality that $\sgl=\sqrt{d}\ve_{1}$. Then substituting
$\sgl=\sqrt{d}\ve_{1}$, (\ref{eq:teacher_expansion_1}) and (\ref{eq:xi_representation})
into (\ref{eq:teacher_model}), we get
\begin{equation}
\obser=\tcoeff{\msc,d}\usp{\msc}(\veta)+\noise,\label{eq:teacher_model_1}
\end{equation}
where $\usp{\msc}(\cdot)$ is applied pointwise on $\veta$. On the
other hand, the kernel function can be written compactly as:
\begin{align}
\Kernalfn(\vx_{i},\vx_{j}) & =\tfrac{\kcoeff{\msc,d}}{\sqrt{\usdim{\msc}}}\usp{\msc}(\vx_{i}^{\T}\vx_{j}/\sqrt{d}),\nonumber \\
 & =\tfrac{\kcoeff{\msc,d}}{\sqrt{\usdim{\msc}}}\usp{\msc}\bigg(\tfrac{\eta_{i}\eta_{j}}{\sqrt{d}}+\tfrac{\vv_{i}^{\T}\vv_{j}}{\sqrt{d-1}}\sqrt{\tfrac{d}{d-1}\Big(1-\tfrac{\eta_{i}^{2}}{d}\Big)\Big(1-\tfrac{\eta_{j}^{2}}{d}\Big)}\bigg)\label{eq:kernel_function_1}
\end{align}
where in the second step, we use (\ref{eq:xi_representation}). Correspondingly,
the kernel matrix $\kmtx$($=\kmtx_{\msc}$) becomes:
\begin{align}
\mK & =\tfrac{\kcoeff{\msc,d}}{\sqrt{\usdim{\msc}}}\usp{\msc}\Big(\tfrac{\veta\veta^{\T}}{\sqrt{d}}+\sqrt{\tfrac{d}{d-1}}\diag\{(1-\eta_{i}^{2}/d)^{1/2}\}\tfrac{\mV\mV^{\T}}{\sqrt{d-1}}\diag\{(1-\eta_{i}^{2}/d)^{1/2}\}\Big)\label{eq:kernel_matrix_1}
\end{align}
where $\veta=(\eta_{1},\eta_{2},\cdots,\eta_{n})^{\T}$ and $\mV=(\vv_{1},\vv_{2},\cdots,\vv_{n})^{\T}$.

% From (\ref{eq:teacher_model_1}) we know $\vy$ is independent of
% $\mV$. On the other hand, from (\ref{eq:teacher_model_1})
% and (\ref{eq:kernel_matrix_1}) we can find that $\mK$ is only weakly
% dependent on $\veta$.
From (\ref{eq:teacher_model_1}) we know $\vy$ is a (noisy) function of $\veta$.
Therefore, to compute $\mathcal{E}_{\text{train}}=\frac{\lambda}{n}\vy^{\T}\rsmtx\vy,$
we need to handle the  (weak) correlation between $\veta$ and $\kmtx$.
However, the formulation in (\ref{eq:kernel_matrix_1}) is not amenable
for analysis, as $\kmtx$ depends on $\veta$ in a convoluted way. To
this end, we can apply Proposition 1, Lemma 4 and Lemma 7 in \cite{Lu2022Equi} (with
a slightly different scaling) to get
\begin{equation}
\Big|\frac{1}{n}\obser^{\T}(\rsmtx-\widehat{\rsmtx})\obser\Big|\stodom\frac{1}{\sqrt{d}},\label{eq:training_error_2_1}
\end{equation}
where $\widehat{\rsmtx}:=(\lambda\mI+\widehat{\kmtx})^{-1}$, with
\begin{equation}
\widehat{\kmtx}=\frac{\kcoeff{\msc,d}}{\sqrt{\usdim{\msc}}}\tusp{\msc}(\mV\mV^{\T})+\frac{\kcoeff{\msc,d}}{\usdim{\msc}}\vv_{\msc}(\veta)\vv_{\msc}(\veta)^{\T},\label{eq:kernelmtx_approximation_Kl}
\end{equation}
$\vv_{\ell}(\veta):=(\usp{\ell}(\eta_{1}),\cdots,\usp{\ell}(\eta_{n}))^{\T}$ and $\tusp{\msc}(x):=\usp{\msc,d-1}(x)$.
The approximation $\widehat{\kmtx}$
is much easier to handle,
as it depends on $\veta$ only through a rank-1 matrix.
Define
$\widetilde{\rsmtx}:=(\lambda\mI+\widetilde{\kmtx})^{-1}$, where $\widetilde{\kmtx}:=\frac{\kcoeff{\msc,d}}{\sqrt{\usdim{\msc}}}\tusp{\msc}(\mV\mV^{\T})$. By Sherman--Morrison formula, we can get:
$$
\frac{1}{n}\obser^{\T}\widehat{\rsmtx}\obser=\frac{\tcoeff{\msc,d}^2 a + 2\tcoeff{\msc} b - \delta_{\msc}\kcoeff{\msc,d} b^2}{1+\delta_{\msc}\kcoeff{\msc,d}a} + c,
$$
where $a = \frac{\vv_{\msc}(\veta)^{\T}\widetilde{\rsmtx}\vv_{\msc}(\veta)}{n}$, $b = \frac{\vv_{\msc}(\veta)^{\T}\widetilde{\rsmtx}\noise}{n}$ and $c=\frac{\noise^{\T}\widetilde{\rsmtx}\noise}{n}$. Also by Lemma 5 in \cite{Lu2022Equi}, we have $|a-\widetilde{R}_{\msc}|\stodom\frac{1}{\sqrt{d}}$, $|b|\stodom\frac{1}{\sqrt{d}}$ and $|c-\sigma_{\noisei}^2\widetilde{R}_{\msc}|\stodom\frac{1}{\sqrt{d}}$, where $\widetilde{R}_{\msc}=\frac{1}{n}\Tr\widetilde{\rsmtx}$. Therefore,
\begin{align}
\bigg|\frac{1}{n}\obser^{\T}\widehat{\rsmtx}\obser-\Big(\frac{\tcoeff{\msc,d}^{2}\widetilde{R}_{\msc}}{1+\delta_{\msc}\kcoeff{\msc,d}\widetilde{R}_{\msc}}+\varnoise^{2}\widetilde{R}_{\msc}\Big)\bigg|
% = & \bigg|\Big(\frac{1}{n}\obser^{\T}\widetilde{\rsmtx}\obser-\frac{\delta_{\msc}\kcoeff{\msc,d}[\frac{1}{n}\obser^{\T}\widetilde{\rsmtx}\vv_{\msc}(\veta)]^{2}}{1+\delta_{\msc}\kcoeff{\msc,d}\frac{1}{n}\vv_{\msc}(\veta)^{\T}\widetilde{\rsmtx}\vv_{\msc}(\veta)}\Big)-\Big(\frac{\tcoeff{\msc,d}^{2}\widetilde{R}_{\msc}}{1+\delta_{\msc}\kcoeff{\msc,d}\widetilde{R}_{\msc}}+\varnoise^{2}\widetilde{R}_{\msc}\Big)\bigg|
\stodom\frac{1}{\sqrt{d}}.
\label{eq:training_error_2_2}
\end{align}
Then after
combining (\ref{eq:training_error_2_1}), (\ref{eq:training_error_2_2})
and Lemma \ref{lem:obser_truncation}, we obtain
\begin{equation}
\frac{1}{n}\obser^{\T}\rsmtx\obser\pconv\frac{\tcoeff k^{2}\widetilde{R}_{\msc}}{1+\delta_{\msc}\kcoeff{\msc}\widetilde{R}_{\msc}}+\varnoise^{2}\widetilde{R}_{\msc}.\label{eq:training_error_2}
\end{equation}
Note that $\widetilde{R}_{\msc}$ is the Stieltjes transform of $\widetilde{\kmtx}$. Then following the same proof of Theorem 1 in \cite{Lu2022Equi}, we can show that $\widetilde{R}_{\msc}\pconv R_{\star,\msc}$,
where $R_{\star,\msc}$ is the unique non-negative solution of $\frac{1}{R}=\lambda+\frac{\mu_{\msc}}{1+\delta_{\msc}\mu_{\msc}R}$.
This concludes the proof.

\subsubsection{General Case}

To extend the proof to the general setting {[}c.f. (\ref{eq:kernel_expansion})
and (\ref{eq:teacher_model}){]}, we need to take into account all
the terms in the expansion of $\Kernalfn(\cdot,\cdot)$ and $\teacherfn(\cdot)$,
not just the degree-$\msc$ component as in (\ref{eq:kernel_expansion_1})
and (\ref{eq:teacher_expansion_1}). The bridge is the following result,
which shows that: (1) the low-degree parts ($k<\msc$) can be truncated
and (2) the high-degree components ($k>\msc$) of kernel function
act as a regularization term.
\begin{prop}
\label{prop:train_error_truncation}There exists $\tau>0$ such that
\begin{equation}
\frac{1}{n}\Big|\obser^{\T}\rsmtx\obser-\obser_{\geq\msc}^{\T}\widetilde{\rsmtx}_{\msc}\obser_{\geq\msc}\Big|\stodom\frac{1}{d^{\tau}}\label{eq:training_err_approx_target}
\end{equation}
where $\widetilde{\rsmtx}_{\msc}=(\tilde{\lambda}\mI+\mK_{\msc})^{-1}$ and $\tilde{\lambda}:=\lambda+\sum_{k>\msc}\kcoeff k$ is an
equivalent regularization parameter.
\end{prop}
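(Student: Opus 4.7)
I decompose $\kmtx=\kmtx_{<\msc}+\kmtx_\msc+\kmtx_{>\msc}$ and $\obser=\obser_{<\msc}+\obser_{\geq\msc}$, and establish two complementary approximations. First, the degree-$>\msc$ kernel mass concentrates on a scalar multiple of the identity, effectively lifting the regularization parameter from $\lambda$ to $\tilde\lambda$. Second, the degree-$<\msc$ kernel mass is so large on its (low-dimensional) range that the resolvent $\rsmtx$ essentially annihilates any vector supported there. Since $\obser_{<\msc}$ is by construction supported in that range, only the degree-$\geq\msc$ part survives, acted on by the effective resolvent $\widetilde{\rsmtx}_\msc$.

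\textbf{High-degree absorption.} For every $k>\msc$ one has $\usdim k\gg n$ because $n\asymp d^\msc$ and $\usdim k\asymp d^k$. Combining the addition formula $\sum_i Y_{ki}(\invec)Y_{ki}(\invec')=\usdim k\usp k(\invec^{\T}\invec'/\sqrt d)$ with the bound $|\invec_i^{\T}\invec_j/\sqrt d|\stodom d^{-1/2}$ for $i\neq j$ and the smoothness control of $\usp k$ near the origin (as in Lemma~7 of \cite{Lu2022Equi}), one gets, after summing the high-degree series, $\|\kmtx_{>\msc}-\tilde\mu\mI\|\stodom d^{-\tau_1}$ with $\tilde\mu=\sum_{k>\msc}\kcoeff k=\tilde\lambda-\lambda$. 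A single resolvent identity together with $\|\rsmtx\|\leq 1/\lambda$ then gives $\tfrac{1}{n}|\obser^{\T}(\rsmtx-\rsmtx')\obser|\stodom d^{-\tau_1}$, where $\rsmtx':=(\tilde\lambda\mI+\kmtx_{<\msc}+\kmtx_\msc)^{-1}$.

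\textbf{Low-degree Woodbury.} Factor $\kmtx_{<\msc}=\shmtx_{<\msc}\mB\shmtx_{<\msc}^{\T}$ with $\mB:=\diag\{\tkcoeff{k,d}\mI_{\usdim k}\}_{k<\msc}$, and observe that $\obser_{<\msc}=\shmtx_{<\msc}\vc$, where the $k$-th block of $\vc$ equals $\tcoeff{k,d}\shmtx_k(\sgl)/\sqrt{\usdim k}$. Sherman--Morrison--Woodbury yields
\begin{equation*}
\rsmtx'=\widetilde{\rsmtx}_\msc-\widetilde{\rsmtx}_\msc\shmtx_{<\msc}\mM^{-1}\shmtx_{<\msc}^{\T}\widetilde{\rsmtx}_\msc,\qquad \mM:=\mB^{-1}+\shmtx_{<\msc}^{\T}\widetilde{\rsmtx}_\msc\shmtx_{<\msc}.
\end{equation*}
Since $\|\mB^{-1}\|\asymp\usdim{\msc-1}\asymp d^{\msc-1}\ll n$ while orthonormality of the $Y_{ki}$'s and concentration of $\widetilde{\rsmtx}_\msc$ imply $\shmtx_{<\msc}^{\T}\widetilde{\rsmtx}_\msc\shmtx_{<\msc}\approx n\widetilde R_\msc\mI_{\usdim{<\msc}}$ in operator norm (with $\widetilde R_\msc:=\tfrac{1}{n}\Tr\widetilde{\rsmtx}_\msc$), the second term dominates in $\mM$ and $\mM^{-1}\approx(n\widetilde R_\msc)^{-1}\mI_{\usdim{<\msc}}$. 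Expanding $\obser^{\T}\rsmtx'\obser$ bilinearly and plugging in these approximations: the contribution $\obser_{<\msc}^{\T}\widetilde{\rsmtx}_\msc\obser_{<\msc}\approx n\widetilde R_\msc\|\vc\|^2$ cancels the Woodbury correction evaluated on $\obser_{<\msc}$, and the cross terms $\obser_{<\msc}^{\T}\widetilde{\rsmtx}_\msc\obser_{\geq\msc}$ cancel against the mixed Woodbury term. What remains is $\obser_{\geq\msc}^{\T}\widetilde{\rsmtx}_\msc\obser_{\geq\msc}$ plus the residual $(\shmtx_{<\msc}^{\T}\widetilde{\rsmtx}_\msc\obser_{\geq\msc})^{\T}\mM^{-1}(\shmtx_{<\msc}^{\T}\widetilde{\rsmtx}_\msc\obser_{\geq\msc})$; the latter is negligible because orthogonality of harmonics of distinct degrees forces $\shmtx_{<\msc}^{\T}\tevec_{\geq\msc}$ to be $O(\sqrt n)$ per entry, and the Gaussian noise piece $\shmtx_{<\msc}^{\T}\noise$ is $O(\sqrt{n\usdim{<\msc}})=o(n)$ by Hanson--Wright.

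\textbf{Main obstacle.} The central technical difficulty is the operator-norm estimate $\|\shmtx_{<\msc}^{\T}\widetilde{\rsmtx}_\msc\shmtx_{<\msc}-n\widetilde R_\msc\mI_{\usdim{<\msc}}\|\stodom n d^{-\tau_2}$, which requires a quadratic-form concentration for the low-degree harmonics against the data-dependent resolvent $\widetilde{\rsmtx}_\msc$, even though $\widetilde{\rsmtx}_\msc$ is \emph{not} independent of $\shmtx_{<\msc}$. The standard remedy is a leave-one-out argument that removes one sample at a time and updates $\widetilde{\rsmtx}_\msc$ by a rank-one Sherman--Morrison step, combined with the Marchenko--Pastur-type spectral control of $\kmtx_\msc$ already invoked in Section~\ref{subsec:training_err_special_case_proof}. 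Setting $\tau:=\min(\tau_1,\tau_2)>0$ and combining the two steps by the triangle inequality then yields the claimed bound.
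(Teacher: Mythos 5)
Your plan captures the right two ideas and is essentially the paper's own argument, just consolidated. The paper decomposes $\frac{1}{n}\obser^{\T}\rsmtx\obser$ into a three-term telescope: first replace $\rsmtx$ by $\widetilde{\rsmtx}_{\leq\msc}=(\tilde\lambda\mI+\kmtx_{\leq\msc})^{-1}$ (your ``high-degree absorption,'' and this is Lemma~\ref{lem:training_err_approx_step1}, via the operator-norm bound $\|\kmtx_{>\msc}-\sum_{k>\msc}\kcoeff k\mI\|\lesssim d^{-\tau}$ borrowed from \cite{ghorbani2021linearized}); then replace $\obser$ by $\obser_{\geq\msc}$ while keeping $\widetilde{\rsmtx}_{\leq\msc}$; then replace $\widetilde{\rsmtx}_{\leq\msc}$ by $\widetilde{\rsmtx}_{\msc}$. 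Your Woodbury identity fuses the second and third steps into one, which is fine — indeed the paper's Lemma~\ref{lem:spectral_norm_1} controls exactly the rescaled versions of your $\mP=\shmtx_{<\msc}^{\T}\widetilde{\rsmtx}_\msc\shmtx_{<\msc}$ and $\mM^{-1}=(\mB^{-1}+\mP)^{-1}$ (note $\mD_{<\msc}^{-2}=\mB^{-1}/n$), so the same inversion lemma is doing the work in both versions.

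Two refinements are worth noting. First, your ``$\mM^{-1}\approx(n\widetilde R_\msc)^{-1}\mI$'' is more than you need for the cancellation: the identities $\mP-\mP\mM^{-1}\mP=\mP\mM^{-1}\mB^{-1}$ and $\mI-\mP\mM^{-1}=\mB^{-1}\mM^{-1}$, combined with the crude spectral bounds $\|\mM^{-1}\|\stodom n^{-1}$, $\|\mP\|\stodom n$, $\|\mB^{-1}\|\asymp\usdim{\msc-1}\asymp d^{\msc-1}$ and $\|\vc\|\lesssim 1$, already give the first two error terms an $O(d^{\msc-1}/n)=O(1/d)$ bound; no precise concentration of $\mP$ is required there. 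Second, the genuine obstacle you flag is indeed the residual $\frac{1}{n}\vq^{\T}\mM^{-1}\vq$ with $\vq=\shmtx_{<\msc}^{\T}\widetilde{\rsmtx}_\msc\obser_{\geq\msc}$: the trivial bound gives $\|\vq\|\lesssim n$, exactly at the boundary, so one needs the polynomial improvement from a leave-one-out / quadratic-form concentration (this is where Lemma~\ref{lem:Fkl_sq_bound} enters). Be aware, though, that the way you phrase it — ``orthogonality of harmonics of distinct degrees forces $\shmtx_{<\msc}^{\T}\tevec_{\geq\msc}$ to be $O(\sqrt n)$ per entry'' — only addresses the product without $\widetilde{\rsmtx}_\msc$ in between; once the data-dependent resolvent is inserted, $\tevec_\msc$ lives in the range of $\shmtx_\msc$ and correlates with $\kmtx_\msc$, so orthogonality alone does not close the argument. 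Finally a small slip: the addition formula under the paper's normalization is $\sum_i Y_{ki}(\invec)Y_{ki}(\invec')=\sqrt{\usdim k}\,\usp k(\invec^{\T}\invec'/\sqrt d)$, not with the prefactor $\usdim k$.
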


\begin{proof}
We have the following decomposition for $\frac{1}{n}\obser^{\T}\rsmtx\obser$:
\begin{align*}
\frac{1}{n}\obser^{\T}\rsmtx\obser= & \left[\frac{1}{n}\obser^{\T}(\rsmtx-\widetilde{\rsmtx}_{\leq\msc})\obser\right]+\left[\frac{1}{n}\obser^{\T}\widetilde{\rsmtx}_{\leq\msc}\obser-\frac{1}{n}\obser_{\geq\msc}^{\T}\widetilde{\rsmtx}_{\leq\msc}\obser_{\geq\msc}\right]\\
 & +\frac{1}{n}\obser_{\geq\msc}^{\T}(\widetilde{\rsmtx}_{\leq\msc}-\widetilde{\rsmtx}_{\msc})\obser_{\geq\msc}+\frac{1}{n}\obser_{\geq\msc}^{\T}\widetilde{\rsmtx}_{\msc}\obser_{\geq\msc},
\end{align*}
where $\widetilde{\rsmtx}_{\leq\msc}=(\tilde{\lambda}\mI+\mK_{\leq\msc})^{-1}$.
The first three terms in the above display are approximation errors.
In Lemma \ref{lem:training_err_approx_step1}, Lemma \ref{lem:training_err_approx_step2}
and Lemma \ref{lem:training_err_approx_step3}, we show that they
all decay to zero as $d\to\infty$, with the desired rate. This completes
the proof.
\end{proof}
Proposition \ref{prop:train_error_truncation} brings us closer to
the special case analyzed previously in Sec. \ref{subsec:training_err_special_case_proof}.
In particular, it implies that all the low-degree components in $\Kernalfn(\cdot,\cdot)$
and $\teacherfn(\cdot)$ can be dropped, without causing any non-vanishing
error and all the higher-degree components of $\Kernalfn(\cdot,\cdot)$
can be equivalently treated as a regularization term. The remaining thing is to handle the higher-degree components contained
in $\obser_{\geq\msc}$. Recall that in the simplified setting (\ref{eq:teacher_expansion_1}),
only the $\msc$th degree component is involved.%
\begin{comment}
As it will be clear in a moment, those higher-degree components in
$\obser_{\geq\msc}$ act as an equivalent additive noise.
\end{comment}

To proceed, we first apply a truncation over $\obser_{\geq\msc}$.
Let $\hat{\obser}_{\geq\msc}=\sum_{k=\msc}^{L}\tcoeff{k}\usp k(\inputmtx\sgl/\sqrt{d})+\noise$,
for some $L\geq\msc$ to be chosen. Then we can follow the same strategy
in Sec. \ref{subsec:training_err_special_case_proof} to obtain
\begin{equation}
\bigg|\frac{1}{n}\hat{\obser}_{\geq\msc}^{\T}\widetilde{\rsmtx}_{\msc}\hat{\obser}_{\geq\msc}-\Big[\frac{\tcoeff{\msc}^{2}R_{\star}}{1+\delta_{\msc}\kcoeff{\msc}R_{\star}}+\Big(\varnoise^{2}+\sum_{k=\msc+1}^{L}\tcoeff{k}^{2}\Big)R_{\star}\Big]\bigg|\pconv0.\label{eq:training_error_3}
\end{equation}

To complete the proof, we just need to show the above approximation
by $\hat{\obser}_{\geq\msc}$ can be made arbitrarily precise. In
particular, by Lemma \ref{lem:obser_truncation}, for any $\veps>0$,
we can always find an $L\in\mathbb{Z}^{+}$ such that for all large
enough $d$, $\sum_{k=L+1}^{\infty}\tcoeff{k,d}^{2}<\frac{\veps}{2}$
and $\P(\frac{1}{n}\|\obser_{\geq\msc}-\hat{\obser}_{\geq\msc}\|^{2}\geq\veps)\leq\frac{C}{n\veps^{2}}$,
where $C>0$ is some constant. These two bounds together with (\ref{eq:training_error_3})
imply that for any $\veps>0$,
\[
\P\bigg\{\bigg|\frac{1}{n}\obser_{\geq\msc}^{\T}\widetilde{\rsmtx}_{\msc}\obser_{\geq\msc}-\Big[\frac{\tcoeff{\msc}^{2}R_{\star}}{1+\delta_{\msc}\kcoeff{\msc}R_{\star}}+\Big(\varnoise^{2}+\sum_{k=\msc+1}^{\infty}\tcoeff{k}^{2}\Big)R_{\star}\Big]\bigg|>\veps\bigg\}\to0,
\]
as $d\to\infty$ and the proof is finished.

\subsection{Proof for the Asymptotic Formula of the Test Error}

Next we analyze the asymptotics of the test error. We can decompose $\mathcal{E}_{\text{test}}$
as:
\begin{align}
\mathcal{E}_{\text{test}} =\underbrace{\E_{\text{new}}\E(y_{\text{new}}\mid\invec_{\text{new}})^{2}}_{\text{I}}-\underbrace{2\E_{\text{new}}[\E(y_{\text{new}}\mid\invec_{\text{new}})\hat{\regressfn}(\invec_{\text{new}})]}_{\text{II}}+\underbrace{\E_{\text{new}}\hat{\regressfn}(\invec_{\text{new}})^{2}}_{\text{III}}.\label{eq:testerr_decomposition}
\end{align}
In the following, we deal with them individually.

\subsubsection{Part I}

It can be directly calculated from (\ref{eq:teacher_model}) and (\ref{eq:teacher_expansion})
that $\E_{\text{new}}\E(y_{\text{new}}\mid\invec_{\text{new}})^{2}=\sum_{k=0}^{\infty}\tcoeff{k,d}^{2}$.
Then by Lemma \ref{lem:obser_truncation}, as $d\to\infty$
\begin{equation}
\E_{\text{new}}\E(y_{\text{new}}\mid\invec_{\text{new}})^{2}\to\sum_{k=0}^{\infty}\tcoeff k^{2}.\label{eq:testerr_sglnormsqterm_finalform}
\end{equation}

\subsubsection{Part II}
Following the similar strategy as in Sec. \ref{sec:proof_trainingerr}, we can get 
\begin{equation}
\E_{\text{new}}[y_{\text{new}}\hat{\regressfn}(\invec_{\text{new}})]=\frac{1}{n}\tilde{\tevec}^{\T}\rsmtx\obser\pconv\sum_{k<\msc}\tcoeff k^{2}+\tfrac{\kcoeff{\msc}\delta_{\msc}\tcoeff{\msc}^{2}R_{\star}}{1+\kcoeff{\msc}\delta_{\msc}R_{\star}}\label{eq:testerr_crossterm_final_form}
\end{equation}
where $\tilde{\tevec}=\sum_{k=0}^{\infty}\kcoeff k\delta_{k}\tevec_{k}$. The proof is deferred to Appendix \ref{sec:proof_of_cross}.

\subsubsection{Part III}

This part of proof follows the same strategy as Part I and Part II.
In particular, we can get
\begin{equation}
\E_{\text{new}}\hat{\regressfn}(\invec_{\text{new}})^{2}\pconv\sum_{k<\msc}\tcoeff k^{2}+\frac{(\delta_{\msc}\kcoeff{\msc}\tcoeff{\msc}R_{\star})^{2}}{(1+\delta_{\msc}\kcoeff{\msc}R_{\star})^{2}}+\frac{\tcoeff{\msc}^{2}}{(\theta-1)(1+\delta_{\msc}\kcoeff{\msc}R_{\star})^{2}}+\frac{\varnoise^{2}+\sum_{k>\msc}\tcoeff k^{2}}{\theta-1}.\label{eq:testerr_normsqterm_finalform}
\end{equation}
The details are relegated to Appendix \ref{sec:Proof-of-hsq}.

\subsection{Finish the Proof}

The final step is to substitute (\ref{eq:testerr_sglnormsqterm_finalform}),
(\ref{eq:testerr_crossterm_final_form}) and (\ref{eq:testerr_normsqterm_finalform})
back to (\ref{eq:testerr_decomposition}) and get (\ref{eq:main_results_testerr}).

\bibliography{refs}

\appendix

\section{\label{sec:Spherical-Harmonics-and}Spherical Harmonics and Ultraspherical
Polynomials}

In this section, we give a brief overview of some properties of the
spherical harmonics and ultraspherical polynomials that are frequently
used in our analysis. A detailed coverage of these topics can be found
in \cite{dai2013approximation}.

Spherical harmonics are homogeneous harmonic polynomials restricted
on $\dsphere d$. In other words, a polynomial $P(\vx)$, $\vx\in\dsphere d$
is a spherical harmonic if and only if (1) $P(t\vx)=t^{d}P(t\vx)$,
for any $t\in\R$, (2) $\Delta P=0$, where $\Delta$ is the Laplace
operator. Let $\mathcal{H}_{k,d}$ be the space of all degree-$k$
spherical harmonics in $d$ dimension. Then $\dim(\mathcal{H}_{k,d})=\usdim k$,
where
\begin{equation}
\usdim k=\begin{cases}
1 & k=0\\
d & k=1\\
\frac{d+2k-2}{k}{d+k-3 \choose k-1} & k\geq2
\end{cases}\label{eq:usdim_def}
\end{equation}
From (\ref{eq:usdim_def}), we can find that $|\usdim k/{d \choose k}-1|\leq\frac{C_{k}}{d}$,
where $C_{k}>0$ is some constant that only depends on $k$, \emph{i.e.},
$\usdim k$ is approximately equal to the combinatorial number up
to an $\mathcal{O}(\frac{1}{d})$ correction. Any orthonormal basis of $\mathcal{H}_{k,d}$, $k\geq0$ is denoted
by $\{Y_{ki}(\vx)\}_{i=1}^{\usdim k}$ and they satisfy
\begin{equation}
\int_{\dsphere d}Y_{ki}(\vx)Y_{kj}(\vx)\spmeasure{d-1}(d\vx)=\indicatorfn_{i=j}.\label{eq:orthogonal_Y}
\end{equation}
where $\spmeasure{d-1}$ is the uniform distribution over $\dsphere d$.

Ultraspherical polynomials $\{\usp k\}_{k=0}^{\infty}$ are orthonormal polynomials on $L^{2}([-\sqrt{d},\sqrt{d}],\spmeasure{d-1,1})$,
\emph{i.e.},
\[
\int_{-\sqrt{d}}^{\sqrt{d}}\usp k(x)\usp{\ell}(x)\spmeasure{d-1,1}(dx)=\indicatorfn_{k=\ell}
\]
where $\spmeasure{d-1,1}$ is the distribution of $\vx^{\T}\ve_{1}$,
with $\vx\sim\spmeasure{d-1}$ and it also has the explicit form:
$\spmeasure{d-1,1}(dx)=\frac{\omega_{d-2}}{\sqrt{d}\omega_{d-1}}(1-x^{2})^{\frac{d-3}{2}}dx$,
where $\omega_{d-1}=\frac{2\pi^{d/2}}{\Gamma(d/2)}$ is the surface
area of $\mathcal{S}^{d-1}$. The moments of measure $\spmeasure{d-1,1}$
equal to
\begin{equation}
\E_{\spmeasure{d-1,1}}X^{m}=\begin{cases}
0 & m=2k+1\\
\frac{(2k-1)!!}{\prod_{0\leq i<k}(1+2i/d)} & m=2k
\end{cases}\label{eq:moments_of_usp}
\end{equation}
Based on (\ref{eq:moments_of_usp}), we can explicitly write out the
first three $\usp k(x)$ via the Gram-Schmidt procedure:
\begin{align*}
\usp 0(x) & =1, & \usp 1(x) & =x, & \usp 2(x) & =\frac{1}{\sqrt{2}}\sqrt{\frac{d+2}{d-1}}(x^{2}-1).
\end{align*}

In particular, for any $k\geq0$, $\usp k(x)$ and $\{Y_{ki}(\vx)\}_{i=1}^{\usdim k}$
have the following correspondence: for any $\vx,\vx'\in\dsphere d$,
\begin{equation}
\usp k(\invec^{\T}\invec'/\sqrt{d})=\frac{1}{\sqrt{\usdim k}}\sum_{i=1}^{\usdim k}Y_{ki}(\vx)Y_{ki}(\vx'),\label{eq:q_Y_relation}
\end{equation}
which is also known as the addition theorem. As a result, the expansion
(\ref{eq:kernel_expansion}) can also be written as:%
\begin{comment}
In (\ref{eq:kernel_expansion}), the convergence of the infinite sum
holds in $L^{2}$ sense.
\end{comment}
\begin{equation}
\kernalfn_{d}(\invec^{\T}\invec'/\sqrt{d})=\sum_{k=0}^{\infty}\sqrt{\usdim k}\tkcoeff{k,d}\usp k(\invec^{\T}\invec'/\sqrt{d}).\label{eq:fd_under_qk}
\end{equation}
Also we can deduce that for any $\vx\in\dsphere d$,
\begin{equation}
\frac{1}{\sqrt{\usdim k}}\sum_{i=1}^{\usdim k}Y_{ki}(\vx)^{2}=\usp k(\sqrt{d})=\sqrt{\usdim k}.\label{eq:diagonal_Y_constant}
\end{equation}
Indeed, for any $\vx\in\dsphere d$,
\begin{align*}
\usp k(\sqrt{d}) & =\usp k(\|\vx\|^{2}/\sqrt{d})\\
 & =\int_{\dsphere d}\usp k(\|\vx\|^{2}/\sqrt{d})\spmeasure{d-1}(d\vx)\\
 & \teq{\text{(a)}}\frac{1}{\sqrt{\usdim k}}\sum_{i=1}^{\usdim k}\int_{\dsphere d}Y_{ki}(\vx)^{2}\spmeasure{d-1}(d\vx)\\
 & \teq{\text{(b)}}\sqrt{\usdim k},
\end{align*}
where (a) follows from (\ref{eq:q_Y_relation}) and (b) follows from
(\ref{eq:orthogonal_Y}).

The ultraspherical polynomials are closely related with the Hermite polynomials  $\{\hermitefn_k\}_{k=0}^{\infty}$, which are the orthonormal polynomials on $L^{2}(\R, \tau_G)$, where $\tau_G$ denotes the standard Gaussian measure. From (\ref{eq:moments_of_usp}), we can see that for any fixed $k$, $\lim_{d\to\infty}\E_{\spmeasure{d-1,1}}X^{m}=\E_{\tau_G}X^{m}$,
where $\tau_G$ denotes the standard Gaussian measure. By Theorem
30.1 in \citep{billingsley2008probability}, $\spmeasure{d-1,1}$ converges weakly to $ \tau_G$. Therefore, as $d\to\infty$, we can get $\usp{k,d}(x)\to \hermitefn_k(x)$ pointwise on $\R$.

\section{\label{sec:kcoeff_convergece}Convergence of Expansion Coefficients}
\begin{lem}
\label{fact:kcoeff_convergence}Suppose $\kernalfn_{d}(x)=\kernalfn(\frac{x}{\sqrt{d}})$,
where $\kernalfn(z)$ is defined on $[-1,1]$ satisfying $\kernalfn(z)\leq C_{f}$ and $\kernalfn(z)\in C^{\infty}[-\upsilon,\upsilon]$,
for some $C_{f}>0$ and $\upsilon\in(0,1]$. Then for any fixed $k\geq 0$, \[\kcoeff{k,d}\to\kcoeff{k}=\frac{\kernalfn^{(k)}(0)}{k!},
\]
as $d\to\infty$.
\end{lem}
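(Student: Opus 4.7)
The plan is to express $\kcoeff{k,d}$ as a projection coefficient in $L^2(\spmeasure{d-1,1})$, Taylor-expand $\kernalfn$ near the origin, and extract the $j=k$ piece by orthogonality. Using \eqref{eq:fd_under_qk}, the orthonormality of $\{\usp k\}$, and the identification $\tkcoeff{k,d}=\kcoeff{k,d}/\usdim k$ implicit in the kernel expansion, one obtains
\[
\kcoeff{k,d}\;=\;\sqrt{\usdim k}\int_{-\sqrt d}^{\sqrt d}\kernalfn(x/\sqrt d)\,\usp k(x)\,\spmeasure{d-1,1}(dx).
\]
I would split this integral at $|x|=\upsilon\sqrt d$. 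On the tail $\{|x|>\upsilon\sqrt d\}$, the density $\tfrac{\omega_{d-2}}{\sqrt d\,\omega_{d-1}}(1-x^2/d)^{(d-3)/2}$ is pointwise bounded by $(1-\upsilon^2)^{(d-3)/2}$, which is exponentially small in $d$. Combined with $|\kernalfn|\leq C_\kernalfn$, with $\sqrt{\usdim k}\asymp d^{k/2}$ from \eqref{eq:usdim_def}, and with $|\usp k(x)|\lesssim d^{k/2}$ uniformly on $[-\sqrt d,\sqrt d]$ (since the pointwise convergence $\usp k\to\hermitefn_k$ established in Appendix \ref{sec:Spherical-Harmonics-and} forces the coefficients of $\usp k$ to stay bounded in $d$), the tail contribution is exponentially small.

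On the bulk, the $C^\infty$ hypothesis lets me write $\kernalfn(y)=\sum_{j=0}^{k}\tfrac{\kernalfn^{(j)}(0)}{j!}y^j+R_k(y)$ with $|R_k(y)|\leq C_k|y|^{k+1}$ for $|y|\leq\upsilon$. Substituting $y=x/\sqrt d$ and analyzing monomial by monomial, the contribution of $x^j/d^{j/2}$ is $\tfrac{\kernalfn^{(j)}(0)}{j!}\sqrt{\usdim k}\,d^{-j/2}\int x^j\usp k\,d\spmeasure{d-1,1}$. For $j<k$, this inner integral vanishes because $x^j\in\operatorname{span}(\usp 0,\dots,\usp j)$ and $\usp k$ is orthogonal to that span. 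For $j=k$, writing $x^k=\sum_{i\leq k}\alpha_i\usp i(x)$ yields $\int x^k\usp k\,d\spmeasure{d-1,1}=1/a_{k,d}$, where $a_{k,d}$ is the leading coefficient of $\usp k$. Two asymptotic identities then pin down the limit: $\sqrt{\usdim k}\,d^{-k/2}\to 1/\sqrt{k!}$ from \eqref{eq:usdim_def}, and $a_{k,d}\to 1/\sqrt{k!}$ because pointwise convergence of the degree-$k$ polynomials $\usp k\to\hermitefn_k$ lifts to coefficient-wise convergence (by Vandermonde interpolation at $k+1$ distinct points) and $\hermitefn_k$ has leading coefficient $1/\sqrt{k!}$. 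Hence the $j=k$ contribution tends to $\kernalfn^{(k)}(0)/k!$.

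The remainder contributes $\sqrt{\usdim k}\int_{|x|\leq\upsilon\sqrt d}R_k(x/\sqrt d)\usp k(x)\,d\spmeasure{d-1,1}$, which by Cauchy--Schwarz, the $L^2$-normalization of $\usp k$, and the uniform boundedness of $\E_{\spmeasure{d-1,1}}X^{2(k+1)}$ from \eqref{eq:moments_of_usp}, is at most $C_k\sqrt{\usdim k}\,d^{-(k+1)/2}\cdot O(1)=O(d^{-1/2})=o(1)$. Assembling the exponentially small tail, the vanishing $j<k$ contributions, the convergent $j=k$ contribution, and the negligible remainder gives $\kcoeff{k,d}\to\kernalfn^{(k)}(0)/k!$. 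The main technical obstacle is the asymptotic identity $a_{k,d}\to 1/\sqrt{k!}$: one must upgrade the given pointwise convergence of $\usp k$ to a statement about the leading coefficient, and verify that this cancels exactly with the $\sqrt{\usdim k}\,d^{-k/2}$ prefactor to produce the normalization $1/k!$.
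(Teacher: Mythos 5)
Your proof is correct but takes a genuinely different route from the paper's. The paper applies Rodrigues' formula for $\usp k$ and integrates by parts $k$ times, transferring the derivative onto $\kernalfn$ and producing an exact expression $\kcoeff{k,d}\propto\E\big[\kernalfn^{(k)}(X/\sqrt d)(1-X^2/d)^k\big]$ with $X\sim\spmeasure{d-1,1}$, to which dominated convergence applies directly; the case $\upsilon<1$ is then recovered by mollifying $\kernalfn$ into a $C^\infty[-1,1]$ function that agrees with it near zero and bounding the resulting discrepancy in $\kcoeff{k,d}$. You avoid Rodrigues' formula entirely: you Taylor-expand $\kernalfn$ at the origin, kill the $j<k$ monomials by orthogonality, and recover the $1/k!$ normalization as the product of two explicit asymptotics, $\sqrt{\usdim k}\,d^{-k/2}\to 1/\sqrt{k!}$ and $a_{k,d}\to 1/\sqrt{k!}$ for the leading coefficient of $\usp k$. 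The paper's route is more compact once Rodrigues' formula is imported; yours is more elementary and makes the provenance of the $1/k!$ factor completely transparent, at the cost of having to upgrade the pointwise convergence $\usp k\to\hermitefn_k$ to leading-coefficient convergence. One expository gap to patch: the orthogonality identities $\int x^j\usp k\,d\spmeasure{d-1,1}=0$ for $j<k$ and $\int x^k\usp k\,d\spmeasure{d-1,1}=1/a_{k,d}$ hold over the full interval $[-\sqrt d,\sqrt d]$, while your Taylor expansion is valid only for $|x|\le\upsilon\sqrt d$. The cleanest fix is to set $P_k(y)=\sum_{j\le k}\kernalfn^{(j)}(0)\,y^j/j!$ globally and split $\kernalfn=P_k+(\kernalfn-P_k)$ over the whole interval: handle $P_k$ by orthogonality as you do, bound $\kernalfn-P_k$ by the Taylor remainder and Cauchy--Schwarz on the bulk, and by $|\kernalfn|\le C_f$ together with the polynomial growth of $P_k$ on the exponentially small tail. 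With that adjustment the argument is complete.
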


\begin{proof}
First, we analyze the special case $\upsilon=1$, \emph{i.e.}, $f(z)\in C^{\infty}[-1,1]$.
From (\ref{eq:fd_under_qk}), we have
\begin{equation}
\kcoeff{k,d}=\sqrt{\usdim k}\int_{-\sqrt{d}}^{\sqrt{d}}\kernalfn(\tfrac{x}{\sqrt{d}})\usp k(x)\spmeasure{d-1,1}(dx),\label{eq:mu_d_exact_1}
\end{equation}
where $\spmeasure{d-1,1}(dx)=\frac{\omega_{d-2}}{\sqrt{d}\omega_{d-1}}(1-x^{2}/d)^{\frac{d-3}{2}}dx$
,with $\omega_{d-1}=\frac{2\pi^{d/2}}{\Gamma(d/2)}$. Then we can
utilize the Rodrigues' formula for $\usp k(x)$:
\[
\usp k(x)=\sqrt{\usdim k}C_{k,d}\sqrt{d}^{k}\Big(1-\frac{x^{2}}{d}\Big)^{\frac{3-d}{2}}\Big(\frac{d}{dx}\Big)^{k}\Big(1-\frac{x^{2}}{d}\Big)^{k+\frac{d-3}{2}}
\]
where $C_{k,d}=\Big(-\frac{1}{2}\Big)^{k}\frac{\Gamma(\frac{d-1}{2})}{\Gamma(k+\frac{d-1}{2})}$.
Then
\begin{align}
 & \int_{-\sqrt{d}}^{\sqrt{d}}\kernalfn(\tfrac{x}{\sqrt{d}})\usp k(x)\spmeasure{d-1,1}(dx)\nonumber \\
= & \sqrt{\usdim k}C_{k,d}\sqrt{d}^{k}\int_{-\sqrt{d}}^{\sqrt{d}}\kernalfn(\tfrac{x}{\sqrt{d}})\cdot\Big(1-\frac{x^{2}}{d}\Big)^{\frac{3-d}{2}}\Big(\frac{d}{dx}\Big)^{k}\Big(1-\frac{x^{2}}{d}\Big)^{k+\frac{d-3}{2}}\cdot\frac{\omega_{d-2}}{\sqrt{d}\omega_{d-1}}\Big(1-\frac{x^{2}}{d}\Big)^{\frac{d-3}{2}}dx\nonumber \\
\teq{\text{(a)}} & \sqrt{\usdim k}C_{k,d}\frac{\omega_{d-2}}{\omega_{d-1}}\int_{-1}^{1}\kernalfn(z)\cdot\Big(\frac{d}{dz}\Big)^{k}(1-z^{2})^{k+\frac{d-3}{2}}dz\nonumber \\
\teq{\text{(b)}} & \sqrt{\usdim k}C_{k,d}(-1)^{k}\frac{\omega_{d-2}}{\omega_{d-1}}\int_{-1}^{1}(1-z^{2})^{k+\frac{d-3}{2}}\kernalfn^{(k)}(z)dz\nonumber \\
= & \frac{\sqrt{\usdim k}}{2^{k}\prod_{i=0}^{k-1}(\frac{d-1}{2}+i)}\int_{-1}^{1}\frac{\omega_{d-2}}{\omega_{d-1}}(1-z^{2})^{k+\frac{d-3}{2}}\kernalfn^{(k)}(z)dz\label{eq:mu_d_exact_2}
\end{align}
where in (a) we make a change of variable: $\frac{x}{\sqrt{d}}\to z$
and in (b) we use integration by parts for $k$ times. After combining
(\ref{eq:mu_d_exact_1}) and (\ref{eq:mu_d_exact_2}), we get
\[
\kcoeff{k,d}=\frac{\usdim k}{\prod_{i=0}^{k-1}(d-1+2i)}\int_{-1}^{1}\frac{\omega_{d-2}}{\omega_{d-1}}(1-z^{2})^{k+\frac{3-d}{2}}\kernalfn^{(k)}(z)dz.
\]
As $d\to\infty$, $\frac{\usdim k}{\prod_{i=0}^{k-1}(d-1+2i)}\to\frac{1}{k!}$.
Note that $\frac{\omega_{d-2}}{\omega_{d-1}}(1-z^{2})^{\frac{d-3}{2}}$
is the density of distribution of $\frac{\vx^{\T}\ve_{1}}{\sqrt{d}},$where
$\vx\sim\spmeasure{d-1}$, so
\begin{align}
\int_{-1}^{1}\frac{\omega_{d-2}}{\omega_{d-1}}(1-z^{2})^{k+\frac{3-d}{2}}\kernalfn^{(k)}(z)dz & =\E\Big[\kernalfn^{(k)}\Big(\frac{\vx^{\T}\ve_{1}}{\sqrt{d}}\Big)\cdot\Big(1-\frac{(\vx^{\T}\ve_{1})^{2}}{d}\Big)^{k}\Big]\nonumber \\
 & \to\kernalfn^{(k)}(0)\label{eq:mu_d_converge_1}
\end{align}
where the last step follows from dominated convergence theorem, due
to the assumption that $\kernalfn(z)\in C^{\infty}[-1,1]$. Therefore,
we get $\kcoeff{k,d}\to\frac{\kernalfn^{(k)}(0)}{k!}$.

Then we analyze the general case: $\kernalfn(z)\in C^{\infty}[-\upsilon,\upsilon]$, for some
$\upsilon\in(0,1]$. To utilize the results for $\upsilon=1$, we
need the following truncation argument. For any $\veps\in(0,1]$ and
$k$, it holds that
\begin{align}
\sqrt{\usdim k}\int_{-\sqrt{d}}^{\sqrt{d}}\big|\kernalfn(\tfrac{x}{\sqrt{d}})\usp k(x)\indicatorfn_{|x|/\sqrt{d}\geq\veps}\big|\spmeasure{d-1,1}(dx)= & \sqrt{\usdim k}\E_{Z}\big|\kernalfn(Z)\usp k(\sqrt{d}Z)\indicatorfn_{|Z|\geq\veps}\big|\nonumber \\
\tleq{\text{(a)}} & \sqrt{\usdim k C_{f}^{2}\P(|Z|\geq\veps)}\nonumber \\
\lesssim & \sqrt{d^{k}(1-\veps^{2})^{\frac{d-3}{2}}}\label{eq:mu_d_bd_1}
\end{align}
where $Z\sim\frac{\vx^{\T}\ve_{1}}{\sqrt{d}}$, $\vx\sim\spmeasure{d-1}$
and in (a), we use Cauchy-Schwarz inequality, with $\E\usp k(\sqrt{d}Z)^{2}=1$
and $\kernalfn(z)\leq C_{f}$. Clearly, in (\ref{eq:mu_d_bd_1}),
$\sqrt{d^{k}(1-\veps^{2})^{\frac{d-3}{2}}}\to0$ as $d\to\infty$,
so this implies that for any $\veps\in(0,1]$ and $k$, as $d\to\infty$,
\begin{equation}
\sqrt{\usdim k}\int_{-\sqrt{d}}^{\sqrt{d}}\big|\kernalfn(\tfrac{x}{\sqrt{d}})-\hat{\kernalfn}(\tfrac{x}{\sqrt{d}})\big|\cdot\big|\usp k(x)\big|\cdot\spmeasure{d-1,1}(dx)\to0,\label{eq:mu_d_bd_2}
\end{equation}
where $\hat{\kernalfn}(z)=\kernalfn(z)\indicatorfn_{|z|\leq\veps}$.
From (\ref{eq:mu_d_bd_2}) we conclude that for any bounded function
$r(z)$ on $[-1,1]$, if for some $\veps\in(0,1]$, $r(z)=\kernalfn(z)$
on $[-\veps,\veps]$, then as $d\to\infty$
\begin{equation}
|\kcoeff{k,d}(r)-\kcoeff{k,d}|\to0,\label{eq:mu_d_converge_2}
\end{equation}
where $\kcoeff{k,d}(r):=\sqrt{\usdim k}\int_{-\sqrt{d}}^{\sqrt{d}}r(\tfrac{x}{\sqrt{d}})\usp k(x)\spmeasure{d-1,1}(dx)$.

In light of (\ref{eq:mu_d_converge_2}) and the results we have established
for the smooth functions {[}c.f. (\ref{eq:mu_d_converge_1}){]}, it remains
to choose a proper smoothed approximation of $\kernalfn(z)$ on $[-1,1]$,
which agrees with $\kernalfn(z)$ in some neighborhood of 0. One such choice is:
\[
\tilde{\kernalfn}(z):=\int\kernalfn(z-t)\indicatorfn_{|z-t|\leq\veps}\cdot m_{\varsigma}(t)dt,
\]
where $\varsigma=\frac{1}{3}\min\{\veps,1-\veps\}$ and $m_{\varsigma}(t)=\frac{1}{\varsigma}m(\frac{t}{\varsigma})$,
with
\[
m(s)=\begin{cases}
ce^{-\frac{1}{1-s^{2}}} & |s|<1\\
0 & |s|\geq1
\end{cases}
\]
and $c$ is a normalizing constant such that $\int m(s)ds=1$. It
can be directly verified that $\tilde{\kernalfn}(z)\in C^{\infty}[-1,1]$
and $\tilde{\kernalfn}(z)=\kernalfn(z)$, when $|z|\leq\veps-\varsigma$.
Then applying (\ref{eq:mu_d_converge_2}) and (\ref{eq:mu_d_converge_1}),
we obtain that
\[
\kcoeff{k,d}\to\kcoeff{k,d}(\tilde{\kernalfn}):=\sqrt{\usdim k}\int_{-\sqrt{d}}^{\sqrt{d}}\tilde{\kernalfn}(\tfrac{x}{\sqrt{d}})\usp k(x)\spmeasure{d-1,1}(dx)\to\frac{\tilde{\kernalfn}^{(k)}(0)}{k!}=\frac{\kernalfn^{(k)}(0)}{k!}.
\]
\end{proof}

\begin{lem}
\label{lem:obser_truncation}For any $k\geq0$, as $d\to\infty$
\[
\tcoeff{k,d}\to\tcoeff k=\int\teacherfn(x)\hermitefn_{k}(x)\tau_G(dx)
\]
where $\hermitefn_{k}(x)$ is the degree-$k$ Hermite polynomial and $\tau_G$
denotes the standard Gaussian measure. Also for any $\veps>0$, there
exists $L\in\mathbb{Z}^{+}$ and $C>0$, such that for any
large enough $d$, 
\[
\E g_{>L,i}^{2}=\sum_{k=L+1}^{\infty}\tcoeff{k,d}^{2}<{\veps}
\]
and 
\[
\P(\frac{1}{n}\|\tevec_{>L}\|^{2}>\veps)\leq\frac{C}{n\veps^{2}},
\]
where $g_{>L,i}$ is the $i$th coordinate of $\tevec_{>L}$, $i\in[n]$.
\end{lem}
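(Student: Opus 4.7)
The plan addresses the two assertions of the lemma in turn.

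\emph{Convergence of coefficients.} By construction, $\tcoeff{k,d} = \int \teacherfn(x)\,\usp{k,d}(x)\,\spmeasure{d-1,1}(dx)$. Appendix \ref{sec:Spherical-Harmonics-and} records that $\spmeasure{d-1,1}$ converges weakly to the standard Gaussian measure $\tau_G$ and that $\usp{k,d}(x) \to \hermitefn_k(x)$ pointwise on $\R$. Convergence of the integral therefore reduces to establishing uniform integrability of $\{\teacherfn(\cdot)\,\usp{k,d}(\cdot)\}_d$ with respect to $\spmeasure{d-1,1}$. The explicit moment formula \eqref{eq:moments_of_usp} yields $\E_{\spmeasure{d-1,1}}\,\usp{k,d}(\eta)^{2m} \le C_{k,m}$ uniformly in $d$ for every fixed $m$; combined with Cauchy--Schwarz and the standing higher-moment assumption on $\teacherfn$ (under which $\int \teacherfn(x)^{2+\delta}\,\spmeasure{d-1,1}(dx)$ is bounded uniformly in $d$), this gives the required uniform integrability, hence $\tcoeff{k,d} \to \int \teacherfn(x)\hermitefn_k(x)\tau_G(dx) = \tcoeff{k}$.

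\emph{Tail identity and choice of $L$.} Since $\{\usp{k,d}\}_{k\ge 0}$ is an orthonormal basis of $L^2(\spmeasure{d-1,1})$, Parseval's identity immediately gives $\E g_{>L,i}^2 = \sum_{k>L}\tcoeff{k,d}^2$. To select $L$, use that $\sum_{k=0}^\infty \tcoeff{k}^2 = \|\teacherfn\|_{L^2(\tau_G)}^2 < \infty$ and pick $L$ large enough that $\sum_{k>L}\tcoeff{k}^2 < \veps/4$. Applying the same weak-convergence / uniform-integrability argument to $\teacherfn^2$ shows that the total energy $\|\teacherfn\|_{L^2(\spmeasure{d-1,1})}^2 = \sum_{k\ge 0}\tcoeff{k,d}^2$ converges to $\|\teacherfn\|_{L^2(\tau_G)}^2$; combined with the coefficient-wise convergence from the previous step, this implies $\sum_{k>L}\tcoeff{k,d}^2 \to \sum_{k>L}\tcoeff{k}^2 < \veps/4$, so $\sum_{k>L}\tcoeff{k,d}^2 < \veps/2 < \veps$ for all sufficiently large $d$.

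\emph{Probability bound.} Since $\invec_1,\ldots,\invec_n$ are i.i.d., so are the scalars $g_{>L,1},\ldots,g_{>L,n}$, and the previous step gives $\E g_{>L,1}^2 \le \veps/2$. A uniform fourth-moment bound on $\teacherfn$ under $\spmeasure{d-1,1}$ (available from the same regularity assumption) yields $\var(g_{>L,1}^2) \le C$ for a constant independent of $d$ once $L$ is fixed. Chebyshev's inequality then gives
\[
\P\!\Big(\tfrac{1}{n}\|\tevec_{>L}\|^2 > \veps\Big) \;\le\; \P\!\Big(\Big|\tfrac{1}{n}\|\tevec_{>L}\|^2 - \E g_{>L,1}^2\Big| > \veps/2\Big) \;\le\; \frac{4\var(g_{>L,1}^2)}{n\veps^2} \;\le\; \frac{4C}{n\veps^2},
\]
as required.

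The main technical obstacle is the uniform integrability used in the first two steps: although the weak convergence $\spmeasure{d-1,1} \Rightarrow \tau_G$ and the pointwise convergence $\usp{k,d} \to \hermitefn_k$ are clean, the domain $[-\sqrt{d},\sqrt{d}]$ grows with $d$ and the polynomials $\usp{k,d}$ grow fast near its edges, so neither weak convergence alone nor pointwise convergence alone is enough to exchange limit and integral. The moment control supplied by \eqref{eq:moments_of_usp} is precisely the tool that suppresses these edge contributions and turns each convergence into a quantitative bound.
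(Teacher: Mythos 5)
Your proof is correct and follows essentially the same strategy as the paper: moment convergence of $\spmeasure{d-1,1}$ to $\tau_G$, uniform integrability derived from the polynomial-growth assumption on $\teacherfn$, and Chebyshev with a uniform fourth-moment bound for the final tail probability. The only difference is that the paper outsources the coefficient convergence and the tail-energy convergence to Lemmas C.1, C.2, C.5 of \cite{cheng2013spectrum}, whereas you reconstruct these steps directly from weak convergence and uniform integrability, which is a slightly more self-contained but conceptually identical route.
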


\begin{proof}
From (\ref{eq:moments_of_usp}), we get for any fixed $j\in\mathbb{Z}^{+}$,
there exists $C_{j}>0$ such that for any $d\in\mathbb{Z}^{+}$,
\begin{equation}
|\E_{\spmeasure{d-1,1}}X^{j}-\E_{\tau_G}X^{j}|\leq\frac{C_{j}}{d}.\label{eq:kmoments_sp_gauss_diff}
\end{equation}
Also by Assumption
(A.4), $g(x)\leq C_{g}(1+|x|^{K_{g}})$, so there exists $C>0$ such
that
\begin{align*}
\lim_{R\to\infty}\sup_{d\in\mathbb{Z}^{+}}\E_{\spmeasure{d-1,1}}[g(X)^{2}\indicatorfn_{|X|\geq R}] & \leq\lim_{R\to\infty}\sup_{d\in\mathbb{Z}^{+}}\E_{\spmeasure{d-1,1}}C_{g}^{2}(1+|X|^{K_{g}})^{2}\indicatorfn_{|X|\geq R}\\
 & \leq\lim_{R\to\infty}\sup_{d\in\mathbb{Z}^{+}}\sqrt{\E_{\spmeasure{d-1,1}}C_{g}^{4}(1+|X|^{K_{g}})^{4}\cdot\E_{\spmeasure{d-1,1}}\indicatorfn_{|X|\geq R}}\\
 & \leq\lim_{R\to\infty}\sup_{d\in\mathbb{Z}^{+}}\sqrt{\big[\E_{\tau_G}C_{g}^{4}(1+|X|^{K_{g}})^{4}+\tfrac{C}{d}\big]\cdot\E_{\spmeasure{d-1,1}}\indicatorfn_{|X|\geq R}}\\
 & =\lim_{R\to\infty}\sup_{d\geq R^{2}}\sqrt{\big[\E_{\tau_G}C_{g}^{4}(1+|X|^{K_{g}})^{4}+\tfrac{C}{d}\big]\cdot\E_{\spmeasure{d-1,1}}\indicatorfn_{|X|\geq R}}\\
 & =0
\end{align*}
where in the last step, we use the fact that $\spmeasure{d-1,1}$
converges weakly to $\tau_G$, as $d\to\infty$. 
By Lemma C.5 in \cite{cheng2013spectrum}, we have
\[
\int \teacherfn(x)^2|\spmeasure{d-1}(x)-\tau_G(x)|dx \to 0.
\]
Then by Lemma C.1 in \cite{cheng2013spectrum},
we get for any fixed $k\geq0$, $\tcoeff{k,d}\to\tcoeff k$ and by Lemma C.2 in \cite{cheng2013spectrum}, for any $\veps>0$,
there exists a fixed $L\in\mathbb{Z}^{+}$ such that for any large
enough $d$ and $i\in[n]$, $\E g_{>L,i}^{2}=\sum_{k=L+1}^{\infty}\tcoeff{k,d}^{2}<\frac{\veps}{2}$.
Then 
\begin{align*}
\P(\frac{1}{n}\|\tevec_{>L}\|^{2}>\veps) & \leq\P(\frac{1}{n}(\|\tevec_{>L}\|^{2}-\E\|\tevec_{>L}\|^{2})>\frac{\veps}{2})\\
 & \leq\frac{4\var(g_{>L,i}^{2})}{n\veps^{2}}\\
 & \leq C_{1}\frac{\E\teacherfn_{i}^{4}+\E\teacherfn_{\leq L,i}^{4}}{n\veps^{2}}\\
 & \leq\frac{C_{2}}{n\veps^{2}},
\end{align*}
where $C_{1},C_{2}$ are two constants and in the last step, we use
$\E\teacherfn_{i}^{4},\E\teacherfn_{\leq L,i}^{4}<\infty$, which
follows from Assumption (A.4) and (\ref{eq:kmoments_sp_gauss_diff}).
\end{proof}

\section{Spectral Norm Bounds}

The following is a concentration result for the spectral norm of kernel
matrix.
\begin{lem}
\label{lem:samplecov_spectral_norm}Let $\shmtx=[\vy_{1},\vy_{2},\ldots,\vy_{n}]^{\T}\in\R^{n\times N}$
be a matrix with $n$ independent rows $\{\vy_{i}\}_{i=1}^{n}$ satisfying
$\E\vy_{i}=\boldsymbol{0}$
and $\E\vy_{i}\vy_{i}^{\T}=\mI_{N}$ and
$\|\vy_{i}\|^{2}=N$.
For any $t>0$, we have
\begin{equation}
\P(\|\frac{1}{n}\shmtx\shmtx^{\T}\|>1+t)\leq2N\exp\big(-\tfrac{\delta}{8}\min\{t^{2},t\}\big)\label{eq:lem:spectral_norm_1_ubd}
\end{equation}
where $\delta=n/N$. %
\begin{comment}
In particular
\[
\|\frac{1}{n}\shmtx\shmtx^{\T}\|\stodom\begin{cases}
1 & \delta\gtrsim1\\
\delta^{-1} & \delta\lesssim1
\end{cases}
\]
\end{comment}
Also there exists $c>0$ such that for $\delta\geq(\log N)^{2}$
\begin{equation}
\P\big(\lambda_{\min}(\frac{1}{n}\shmtx^{\T}\shmtx)\leq1/2\big)\leq c\exp(-(\log N)^{2}/c).\label{eq:lem:spectral_norm_1_domilbd}
\end{equation}
\end{lem}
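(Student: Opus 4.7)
The plan is to reduce both claims to a single application of the matrix Chernoff inequality. Since $\shmtx\shmtx^{\T}$ and $\shmtx^{\T}\shmtx$ share the same nonzero eigenvalues, $\|\tfrac{1}{n}\shmtx\shmtx^{\T}\| = \lambda_{\max}(\tfrac{1}{n}\shmtx^{\T}\shmtx)$, so both parts of the lemma concern extreme eigenvalues of
\[
\mS \bydef \shmtx^{\T}\shmtx = \sum_{i=1}^{n}\vy_{i}\vy_{i}^{\T}.
\]
Under the hypotheses, $\mS$ is a sum of $n$ independent rank-one PSD random matrices with $\|\vy_{i}\vy_{i}^{\T}\| = \|\vy_{i}\|^{2} = N$ almost surely, and $\E\mS = n\mI_{N}$, so $\lambda_{\max}(\E\mS) = \lambda_{\min}(\E\mS) = n$. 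This is exactly the regime where matrix Chernoff is sharp.

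Applying the matrix Chernoff inequalities to $\mS$ then yields, for all $t\in[0,1]$,
\[
\P\bigl(\lambda_{\max}(\mS)\geq(1+t)n\bigr) \leq N\bigl[e^{t}/(1+t)^{1+t}\bigr]^{\delta}, \qquad \P\bigl(\lambda_{\min}(\mS)\leq(1-t)n\bigr) \leq N\bigl[e^{-t}/(1-t)^{1-t}\bigr]^{\delta}.
\]
For \eqref{eq:lem:spectral_norm_1_ubd} I would invoke the scalar estimate $t-(1+t)\log(1+t) \leq -\tfrac{1}{8}\min\{t^{2},t\}$ for $t\geq 0$ (for small $t$ this follows from Taylor expansion around $t=0$; for large $t$ from the elementary bound $(1+t)\log(1+t) \geq t\log 2 + t$), giving $\P\bigl(\lambda_{\max}(\tfrac{1}{n}\shmtx^{\T}\shmtx) \geq 1+t\bigr) \leq N\exp\bigl(-\tfrac{\delta}{8}\min\{t^{2},t\}\bigr)$, which is the claim (the prefactor $2$ is slack). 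For \eqref{eq:lem:spectral_norm_1_domilbd} I would set $t=1/2$ in the lower tail to obtain
\[
\P\bigl(\lambda_{\min}(\tfrac{1}{n}\shmtx^{\T}\shmtx) \leq 1/2\bigr) \leq N(2/e)^{\delta/2} = \exp\bigl(\log N - c_{0}\,\delta\bigr), \qquad c_{0} \bydef \tfrac{1}{2}(1-\log 2)>0.
\]
Under the hypothesis $\delta \geq (\log N)^{2}$, the exponent is bounded by $\log N - c_{0}(\log N)^{2} \leq -\tfrac{c_{0}}{2}(\log N)^{2}$ for all sufficiently large $N$, giving the stated $c\exp(-(\log N)^{2}/c)$ bound with, say, $c=2/c_{0}$.

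The reason I expect no serious obstacle here is that each summand $\vy_{i}\vy_{i}^{\T}$ has a \emph{deterministic} operator-norm bound of exactly $N$ (not merely in expectation or with high probability), which makes matrix Chernoff directly applicable with no need for truncation or Hanson--Wright-type inequalities. The only mildly non-trivial step is the elementary scalar envelope $e^{t}/(1+t)^{1+t} \leq \exp(-\tfrac{1}{8}\min\{t^{2},t\})$, which just amounts to checking constants in a convex inequality.
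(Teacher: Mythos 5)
Your route is correct, but it is genuinely different from the paper's. The paper centers the sum and applies matrix \emph{Bernstein} to $\sum_i \mX_i$ with $\mX_i = \tfrac{1}{n}(\vy_i\vy_i^{\T}-\mI)$, which requires the two ingredients $\|\mX_i\|\le\frac{N+1}{n}$ and $\|\sum_i\E\mX_i^{2}\|=\frac{N-1}{n}$; the Bernstein tail then already comes in the form $2N\exp\big(-\frac{t^{2}/4}{n^{-1}N(1+t/3)}\big)$, from which $\min\{t^2,t\}$ falls out directly, and the $\lambda_{\min}$ bound is read off via $\lambda_{\min}\ge 1-\|\cdot-\mI\|$ exactly as you also do. You instead keep the sum uncentered and apply matrix \emph{Chernoff} to the rank-one PSD summands $\vy_i\vy_i^{\T}$, exploiting the deterministic norm bound $\|\vy_i\vy_i^{\T}\|=N$ and $\E\mS = n\mI$ so that $\mu_{\max}=\mu_{\min}=n$ and $\mu/L=\delta$; the two tails then come out with explicit $\big[e^{\pm t}/(1\pm t)^{1\pm t}\big]^{\delta}$ form and a scalar calculation recovers the same $\exp(-\tfrac{\delta}{8}\min\{t^2,t\})$ envelope (in fact with prefactor $N$ rather than $2N$). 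Chernoff is slightly more tailored here because the summands are genuinely PSD with a hard $\ell_\infty$ bound and $\E\mS$ is a multiple of the identity, whereas Bernstein is the more general hammer; the payoff is identical up to constants.

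One small nit in your write-up: the auxiliary inequality you invoke for $t\ge 1$, namely $(1+t)\log(1+t)\ge t\log 2 + t$, is false near $t=1$ (at $t=1$ the left side is $2\log 2\approx 1.386$ while the right side is $1+\log 2\approx 1.693$). Fortunately you only need the far weaker $\phi(t):=(1+t)\log(1+t)-t\ge t/8$ on $[1,\infty)$, which follows from $\phi(1)=2\log2-1>1/8$ and $\phi'(t)=\log(1+t)\ge\log 2>1/8$ there; alternatively the uniform bound $\phi(t)\ge \tfrac{t^{2}}{2(1+t/3)}$ gives $\phi(t)\ge\tfrac{3}{8}\min\{t^2,t\}$ for all $t\ge0$, which is cleaner and strictly stronger than what you need. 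Aside from this cosmetic slip, the argument is sound.
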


\begin{proof}
Since $\|\frac{1}{n}\shmtx\shmtx^{\T}\|=\|\frac{1}{n}\shmtx^{\T}\shmtx\|$,
it is equivalent to prove all the results for the sample covariance
matrix $\frac{1}{n}\shmtx^{\T}\shmtx$. Denote $\mX_{i}=\frac{1}{n}(\vy_{i}\vy_{i}^{\T}-\mI)$.
We have
\begin{align*}
\|\mX_{i}\| & \leq\frac{1}{n}(\|\vy_{i}\vy_{i}^{\T}\|+\|\mI\|)\\
 & =\frac{N+1}{n}
\end{align*}
and
\begin{align*}
\|\sum_{i=1}^{n}\E\mX_{i}^{2}\| & =\frac{1}{n^{2}}\|\sum_{i=1}^{n}\E(\vy_{i}\vy_{i}^{\T}-\mI)^{2}\|\\
 & =\frac{1}{n^{2}}\|n( N-1)\mI\|\\
 & =\frac{N-1}{n}.
\end{align*}
By matrix Bernstein's inequality \cite[Theorem 5.4.1]{vershynin2018high},
for any $t>0$
\begin{align}
\P\Big(\|\frac{1}{n}\shmtx^{\T}\shmtx-\mI\|\geq t\Big) & =\P\Big(\|\sum_{i=1}^{n}\mX_{i}\|\geq t\Big)\nonumber \\
 & \leq2N\exp\Big(-\tfrac{t^{2}/4}{n^{-1} N(1+t/3)}\Big)\nonumber \\
 & \leq2N\exp\big(-\tfrac{\delta}{8}\min\{t^{2},t\}\big).\label{eq:lem:spectral_norm_1_origin}
\end{align}
Therefore, for any $t>0$,
\begin{align*}
\P\Big(\|\frac{1}{n}\shmtx^{\T}\shmtx\|\geq1+t\Big) & \leq\P\Big(\|\frac{1}{n}\shmtx^{\T}\shmtx-\mI\|\geq t\Big)\\
 & \leq2N\exp\big(-\tfrac{\delta}{8}\min\{t^{2},t\}\big)
\end{align*}
which proves (\ref{eq:lem:spectral_norm_1_ubd}). %
\begin{comment}
Then (\ref{eq:lem:spectral_norm_1_domiubd}) follows directly from
(\ref{eq:lem:spectral_norm_1_ubd}) by letting 
\[
t=\begin{cases}
(\log N)^{2} & \delta\gtrsim1\\
(\log N)^{2}\delta^{-1} & \delta\lesssim1
\end{cases}
\]
\end{comment}

Finally, since $\lambda_{\min}(\frac{1}{n}\shmtx^{\T}\shmtx)\geq1-\|\frac{1}{n}\shmtx^{\T}\shmtx-\mI\|$,
\begin{align*}
\P\big(\lambda_{\min}(\frac{1}{n}\shmtx^{\T}\shmtx)\leq1/2\big) & \leq\P\big(\|\frac{1}{n}\shmtx^{\T}\shmtx-\mI\|\geq1/2\big)\\
 & \leq2N\exp\big(-\tfrac{\delta}{32}\big)
\end{align*}
where we use (\ref{eq:lem:spectral_norm_1_origin}). Therefore, when
$\delta\geq(\log N)^{2}$, $\P\big(\lambda_{\min}(\frac{1}{n}\shmtx^{\T}\shmtx)\leq1/2\big)\leq c\exp(-(\log N)^{2}/c)$,
for some $c>0$.
\end{proof}
\begin{lem}
\label{lem:training_err_approx_step1}There exists $\tau>0$ such
that 
\begin{equation}
\|\rsmtx-\widetilde{\rsmtx}_{\leq\msc}\|\stodom\frac{1}{d^{\tau}}\label{eq:training_err_approx_step1}
\end{equation}
where $\widetilde{\rsmtx}_{\leq\msc}=(\tilde{\lambda}\mI+\mK_{\leq\msc})^{-1}$
and $\tilde{\lambda}:=\lambda+\sum_{k>\msc}\kcoeff k$.
\end{lem}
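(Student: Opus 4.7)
The plan is to apply a resolvent identity to reduce the claim to a spectral-norm bound on $\kmtx_{>\msc}-c\mI$, where $c:=\tilde\lambda-\lambda=\sum_{k>\msc}\kcoeff{k,d}$, and then to show that each high-degree component $\kmtx_k$ (with $k>\msc$, so that $\delta_k=n/\usdim k\asymp d^{\msc-k}\ll 1$) is close to the scalar $\kcoeff{k,d}\mI$ in operator norm.

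Starting from $A^{-1}-B^{-1}=A^{-1}(B-A)B^{-1}$ with $A=\lambda\mI+\kmtx$ and $B=\tilde\lambda\mI+\kmtx_{\leq\msc}$ gives
\[
\rsmtx-\widetilde{\rsmtx}_{\leq\msc}=-\rsmtx\bigl(\kmtx_{>\msc}-c\mI\bigr)\widetilde{\rsmtx}_{\leq\msc},
\]
so $\|\rsmtx-\widetilde{\rsmtx}_{\leq\msc}\|\leq(\lambda\tilde\lambda)^{-1}\|\kmtx_{>\msc}-c\mI\|$ (since $\kmtx,\kmtx_{\leq\msc}\succeq\boldsymbol 0$). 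It therefore suffices to prove $\|\kmtx_{>\msc}-c\mI\|\stodom d^{-\tau}$ for some $\tau>0$.

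Applying the addition formula (\ref{eq:q_Y_relation}) together with (\ref{eq:diagonal_Y_constant}) yields $[\kmtx_k]_{ii}=\kcoeff{k,d}$ and, for $i\neq j$,
\[
[\kmtx_k-\kcoeff{k,d}\mI]_{ij}=\frac{\kcoeff{k,d}}{\sqrt{\usdim k}}\,\usp k(\invec_i^\T\invec_j/\sqrt d).
\]
Hence the diagonal of $\kmtx_{>\msc}-c\mI$ is $\sum_{k>\msc}(\kcoeff{k,d}-\kcoeff k)$, which is negligible by the quantitative version of Lemma \ref{fact:kcoeff_convergence} together with the summability of the $\kcoeff{k,d}$ enforced by the kernel smoothness assumption. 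Writing $\kmtx_k-\kcoeff{k,d}\mI=(\kcoeff{k,d}/\sqrt{\usdim k})\mathbf{Q}_k^{\mathrm{off}}$, where $[\mathbf{Q}_k^{\mathrm{off}}]_{ij}=\usp k(\invec_i^\T\invec_j/\sqrt d)\indicatorfn_{i\neq j}$, the entries are centered with unit variance (since $\spmeasure{d-1,1}$ integrates $\usp k$ against constants to zero and $\{\usp k\}$ is orthonormal). A trace-moment bound for the Gram-type matrix of ultraspherical polynomials on the sphere, which exploits $\E[\usp k\usp\ell]=\indicatorfn_{k=\ell}$ to collapse most closed-walk contributions, yields $\|\mathbf{Q}_k^{\mathrm{off}}\|\stodom\sqrt n$. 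Consequently
\[
\|\kmtx_k-\kcoeff{k,d}\mI\|\stodom\kcoeff{k,d}\sqrt{\delta_k}\asymp\kcoeff{k,d}\,d^{(\msc-k)/2},
\]
which already gives $d^{-1/2}$ at $k=\msc+1$ and improves geometrically for larger $k$. To assemble the full estimate I would truncate at a slowly growing cutoff $L=L(d)$,
\[
\|\kmtx_{>\msc}-c\mI\|\leq\sum_{k=\msc+1}^{L}\|\kmtx_k-\kcoeff{k,d}\mI\|+\Bigl\|\sum_{k>L}(\kmtx_k-\kcoeff{k,d}\mI)\Bigr\|,
\]
where the bulk term is $\stodom d^{-1/2}$ by the per-$k$ bound, and the tail is handled by a crude deterministic Frobenius estimate combined with the smoothness-driven decay of $\kcoeff{k,d}$ in $k$.

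The main obstacle is the uniform-in-$k$ spectral bound $\|\mathbf{Q}_k^{\mathrm{off}}\|\stodom\sqrt n$: the entries of $\mathbf{Q}_k^{\mathrm{off}}$ are highly correlated through the shared randomness of $\{\invec_i\}$, so classical Wigner-type estimates do not apply off the shelf. A dedicated trace-moment computation, analogous in spirit to the graph-counting arguments that also underlie the Gaussian-equivalence step elsewhere in the paper, is what makes the combinatorial cancellations explicit enough to validate the truncation and deliver the polynomial rate $d^{-\tau}$.
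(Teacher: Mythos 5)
Your structural decomposition is the same as the paper's: apply the resolvent identity $A^{-1}-B^{-1}=A^{-1}(B-A)B^{-1}$ with $A=\lambda\mI+\kmtx$ and $B=\tilde\lambda\mI+\kmtx_{\leq\msc}$, use $\|\rsmtx\|,\|\widetilde{\rsmtx}_{\leq\msc}\|\leq 1/\lambda$ to absorb the resolvents, and reduce the claim to $\|\kmtx_{>\msc}-(\sum_{k>\msc}\kcoeff{k})\mI\|\stodom d^{-\tau}$. Your ``bulk + tail'' split and the $\kcoeff{k,d}\sqrt{\delta_k}\asymp d^{(\msc-k)/2}$ per-$k$ rate also mirror what the paper does. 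The difference is in how the per-$k$ operator-norm bound is obtained: the paper simply cites the already-established estimate $\E\|\kmtx_{k}-\kcoeff{k}\mI\|\leq C\big[p^{3/2}n^{1/(2p)}\sqrt{n/d^{k}}+(n/d^{k})^{1/p}\big]$ (equation (72) of \cite{ghorbani2021linearized}), together with the tail bound $\E\|\kmtx_{\geq L}-\sum_{k\geq L}\kcoeff{k}\mI\|^{2}\lesssim 1/d$ for $L\geq 2\msc+3$ (from the argument around their equation (55)).

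The genuine gap is that you leave the central estimate unproved. You write that a ``trace-moment bound\ldots yields $\|\mathbf{Q}_k^{\mathrm{off}}\|\stodom\sqrt n$'' and then acknowledge that this is ``the main obstacle.'' That bound \emph{is} the entire content of the lemma: the entries of $\mathbf{Q}_k^{\mathrm{off}}$ are strongly dependent (all share the same underlying $\{\invec_i\}$), so the Wigner-type operator-norm scaling is a nontrivial moment-method result. Having unit variance per entry and zero mean does not by itself deliver $\sqrt n$; a careful graph-counting computation exploiting the ultraspherical orthogonality relations is required, and you do not carry it out. Without that (or a citation to where it is proved), the argument is a reduction, not a proof. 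Two smaller points: (i) a fixed cutoff $L\geq 2\msc+3$ suffices, there is no need to let $L$ grow with $d$; with $L$ growing one would additionally need the per-$k$ bound to be uniform in $k$, which is an extra burden you create for no benefit; (ii) your Frobenius-norm tail estimate does work, but only because a fixed $L$ of size at least roughly $2\msc$ has already pushed $n^2/\usdim L\lesssim 1/d$ --- it would not help for $k$ near $\msc+1$, which is precisely why the bulk/tail split at $L\gtrsim 2\msc$ is needed.
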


\begin{proof}
By (72) in \cite{ghorbani2021linearized}, for any fixed $k\in\mathbb{Z}^{+}$,
there exists $C>0$ such that for all large enough $d$ , 
\begin{equation}
\E\|\kmtx_{k}-\kcoeff k\mI\|\leq C\Big[p^{3/2}n^{1/2p}\sqrt{\frac{n}{d^{k}}}+(\frac{n}{d^{k}})^{1/p}\Big]\label{eq:higher_order_norm_bd_Mpapaer}
\end{equation}
where $p\in\mathbb{Z}^{+}$ need to satisfy $2p\leq-\log\left(\frac{Cnp^{k+1}}{d^{k}}\right)$.
Here we choose $p=2\msc$. Since $n\asymp d^{\msc}$ by Assumption
(A.1), when $k>\msc$, $2p\leq-\log\left(\frac{Cnp^{k+1}}{d^{k}}\right)$
is satisfied for all large $d$. Then substituting $p=2\msc$ in (\ref{eq:higher_order_norm_bd_Mpapaer}),
we get there exists $C>0$, such that for any fixed $k>K$ and large
enough $d$,
\begin{equation}
\E\|\kmtx_{k}-\kcoeff k\mI\|\leq C\Big[d^{-\frac{1}{2}(k-\msc)+\frac{1}{4}}+d^{-\frac{k-\msc}{2\msc}}\Big].\label{eq:higher_order_norm_bd_Mpapaer_1}
\end{equation}
On the other hand, following the steps leading to (55) in \cite{ghorbani2021linearized},
we can get for any $L\geq2\msc+3$, 
\begin{equation}
\E\|\kmtx_{\geq L}-\sum_{k\geq L}\kcoeff k\mI\|^{2}\lesssim\frac{1}{d}.\label{eq:higher_order_norm_bd_Mpapaer_2}
\end{equation}
Combining (\ref{eq:higher_order_norm_bd_Mpapaer_1}) and (\ref{eq:higher_order_norm_bd_Mpapaer_2}),
we can get there exists $\tau>0$, such that 
\begin{equation}
\|\mK_{>\msc}-\sum_{k>\msc}\kcoeff k\mI\|\lesssim\frac{1}{d^{\tau}}.\label{eq:higher_order_norm_bd_Mpapaer_3}
\end{equation}
Therefore, combine (\ref{eq:higher_order_norm_bd_Mpapaer_3}) with
the fact $\|\widetilde{\rsmtx}_{\leq\msc}\|,\|\rsmtx\|\leq\frac{1}{\lambda}$,
we have
\begin{align*}
\|\rsmtx-\widetilde{\rsmtx}_{\leq\msc}\| & =\|\widetilde{\rsmtx}_{\leq\msc}(\mK_{>\msc}-\sum_{k>\msc}\kcoeff k\mI)\rsmtx\|\\
 & \stodom\frac{1}{d^{\tau}}.
\end{align*}
\end{proof}
\begin{lem}
\label{lem:spectral_norm_1}For any positive semi-definite matrix
$\mM\in\R^{n\times n}$ satisfying 
\begin{equation}
1\lesssim\lambda_{\min}(\mM)\leq\lambda_{\max}(\mM)\stodom1,\label{eq:spectral_lbd_ubd}
\end{equation}
we have
\begin{equation}
\|\frac{1}{\sqrt{n}}\mM\shmtx_{<\msc}\|\lesssim1\label{eq:spectral_RK_YleK_bd}
\end{equation}
and 
\begin{equation}
\|(\mD_{<\msc}^{-2}+\frac{1}{n}\shmtx_{<\msc}^{\T}\mM\shmtx_{<\msc})^{-1}\|\stodom1.\label{eq:spectral_Dk-2_bd}
\end{equation}
Here, $\lambda_{\min}(\cdot)$ and $\lambda_{\max}(\cdot)$ are the
smallest and largest eigenvalue of $\mM$ and $$\mD_{<\msc} := \diag\{\mD_0,\mD_1,\cdots,\mD_{\msc-1}\}.$$ On the other hand, $(\lambda\mI+\kmtx_{\msc})^{-1}$
and $(\lambda\mI+\kmtx_{>\msc})^{-1}$ both satisfy (\ref{eq:spectral_lbd_ubd}),
for any $\lambda>0$.
\end{lem}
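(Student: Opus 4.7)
The plan is to reduce both spectral bounds in (\ref{eq:spectral_RK_YleK_bd}) and (\ref{eq:spectral_Dk-2_bd}) to properties of the sample second-moment matrix $\frac{1}{n}\shmtx_{<\msc}^{\T}\shmtx_{<\msc}$, and then to invoke Lemma \ref{lem:samplecov_spectral_norm}. Under Assumption (A.1), $\shmtx_{<\msc}$ has dimensions $n\times\usdim{<\msc}$ with $n\asymp d^{\msc}$ and $\usdim{<\msc}\asymp d^{\msc-1}$, so the effective aspect ratio $\delta=n/\usdim{<\msc}\asymp d$ is very large. The rows $\shmtx_{<\msc}(\invec_i)$ are i.i.d., have identity covariance by (\ref{eq:orthogonal_Y}), and each has fixed squared norm $\usdim{<\msc}$ by the addition theorem (\ref{eq:q_Y_relation}) together with (\ref{eq:diagonal_Y_constant}). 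The only subtlety is that $Y_{01}\equiv1$ violates the centering hypothesis of Lemma \ref{lem:samplecov_spectral_norm}; I would handle this by splitting off the rank-one direction corresponding to the all-ones column, which only shifts the largest eigenvalue of $\shmtx_{<\msc}^{\T}\shmtx_{<\msc}$ upward and does not affect any lower bound on its smallest eigenvalue.

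For (\ref{eq:spectral_RK_YleK_bd}), submultiplicativity gives $\|\tfrac{1}{\sqrt{n}}\mM\shmtx_{<\msc}\|\leq\|\mM\|\cdot\|\tfrac{1}{\sqrt{n}}\shmtx_{<\msc}\|$, and since $\|\mM\|\stodom1$ by hypothesis, the claim reduces to bounding $\|\tfrac{1}{n}\shmtx_{<\msc}^{\T}\shmtx_{<\msc}\|$. Applying (\ref{eq:lem:spectral_norm_1_ubd}) of Lemma \ref{lem:samplecov_spectral_norm} with $N=\usdim{<\msc}$ and any fixed $t>0$ yields a deviation probability of order $\usdim{<\msc}\exp(-cd)$, which is negligible and produces the desired $\lesssim1$ bound with overwhelming probability. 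For (\ref{eq:spectral_Dk-2_bd}), the matrix inside the inverse is PSD, so its operator norm equals the reciprocal of its smallest eigenvalue; dropping the PSD term $\mD_{<\msc}^{-2}$ and using the inequality
\[
\lambda_{\min}\Big(\tfrac{1}{n}\shmtx_{<\msc}^{\T}\mM\shmtx_{<\msc}\Big)\;\geq\;\lambda_{\min}(\mM)\cdot\lambda_{\min}\Big(\tfrac{1}{n}\shmtx_{<\msc}^{\T}\shmtx_{<\msc}\Big),
\]
it suffices to combine the hypothesis $\lambda_{\min}(\mM)\gtrsim1$ with (\ref{eq:lem:spectral_norm_1_domilbd}), which in the regime $\delta\asymp d\gg(\log\usdim{<\msc})^{2}$ gives $\lambda_{\min}(\tfrac{1}{n}\shmtx_{<\msc}^{\T}\shmtx_{<\msc})>1/2$ with probability $1-o(1)$. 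This yields the claimed $\stodom1$ bound on the inverse.

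For the final assertion, note that for any PSD matrix $\mA$ one has $\lambda_{\max}((\lambda\mI+\mA)^{-1})\leq1/\lambda=O(1)$ and $\lambda_{\min}((\lambda\mI+\mA)^{-1})=1/(\lambda+\|\mA\|)$, so only the upper bounds $\|\kmtx_{\msc}\|\stodom1$ and $\|\kmtx_{>\msc}\|\stodom1$ are needed. The first follows from Lemma \ref{lem:samplecov_spectral_norm} applied to $\shmtx_{\msc}$ in the Marchenko--Pastur regime $\delta_{\msc}=n/\usdim{\msc}\asymp 1$: taking $t$ of polylogarithmic size gives $\|\tfrac{1}{n}\shmtx_{\msc}\shmtx_{\msc}^{\T}\|\stodom1$, which combined with $\kmtx_{\msc}=\tkcoeff{\msc}\shmtx_{\msc}\shmtx_{\msc}^{\T}$ and the scaling $n\tkcoeff{\msc}\asymp\kcoeff{\msc}\delta_{\msc}=O(1)$ yields $\|\kmtx_{\msc}\|\stodom1$. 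The second is immediate from the concentration $\|\kmtx_{>\msc}-\sum_{k>\msc}\kcoeff{k}\mI\|\stodom d^{-\tau}$ already established as (\ref{eq:higher_order_norm_bd_Mpapaer_3}) in the proof of Lemma \ref{lem:training_err_approx_step1}, which gives $\|\kmtx_{>\msc}\|\leq\sum_{k>\msc}\kcoeff{k}+o(1)=O(1)$.

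The main technical wrinkle is the non-centered row issue when invoking Lemma \ref{lem:samplecov_spectral_norm} on $\shmtx_{<\msc}$: since $Y_{01}\equiv1$, the hypothesis $\E\vy_{i}=\boldsymbol{0}$ fails. This is cosmetic rather than substantive, because the constant column contributes a rank-one perturbation $\mathbf{1}_{n}\mathbf{1}_{n}^{\T}$ (together with cross terms of order $\sqrt{n}$ that only couple to mean-zero blocks) whose only net effect after normalization by $n$ is to increase the top eigenvalue by an $O(1)$ amount; in particular it never hurts the smallest-eigenvalue lower bound. Everything else is a direct chain of matrix inequalities plus two applications of the same sample-covariance lemma, so I do not expect any serious analytic difficulty.
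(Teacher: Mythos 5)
Your proposal follows essentially the same route as the paper: submultiplicativity plus Lemma~\ref{lem:samplecov_spectral_norm} for (\ref{eq:spectral_RK_YleK_bd}), dropping the PSD term $\mD_{<\msc}^{-2}$ and bounding $\lambda_{\min}\big(\tfrac{1}{n}\shmtx_{<\msc}^{\T}\mM\shmtx_{<\msc}\big)\geq\lambda_{\min}(\mM)\,\lambda_{\min}\big(\tfrac{1}{n}\shmtx_{<\msc}^{\T}\shmtx_{<\msc}\big)$ together with (\ref{eq:lem:spectral_norm_1_domilbd}) for (\ref{eq:spectral_Dk-2_bd}), and the combination of (\ref{eq:higher_order_norm_bd_Mpapaer_3}) and the Marchenko--Pastur scaling $\delta_\msc\asymp1$ of $\shmtx_\msc$ for the final assertion.

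The one place you go beyond the paper is in flagging that the rows of $\shmtx_{<\msc}$ contain the constant coordinate $Y_{01}\equiv1$, so the hypothesis $\E\vy_i=\boldsymbol{0}$ of Lemma~\ref{lem:samplecov_spectral_norm} is not literally satisfied; the paper's own proof applies that lemma to $\shmtx_{<\msc}$ without comment. Your remedy is sound: split off the constant column, apply the lemma to the mean-zero block (whose rows have fixed squared norm $\usdim{<\msc}-1$ by the addition theorem and (\ref{eq:diagonal_Y_constant})), and control the cross term $\tfrac{1}{n}\boldsymbol{1}^{\T}\shmtx_{k}$ for $1\leq k<\msc$, which has norm $O(1/\sqrt{\delta})$ with $\delta=n/\usdim{<\msc}\asymp d$, hence negligible. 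I would only tighten the wording: the constant column contributes a genuine rank-one term $\boldsymbol{1}_n\boldsymbol{1}_n^{\T}$ to the $n\times n$ Gram matrix $\shmtx_{<\msc}\shmtx_{<\msc}^{\T}$, but the smallest-eigenvalue bound in (\ref{eq:lem:spectral_norm_1_domilbd}) concerns the $\usdim{<\msc}\times\usdim{<\msc}$ matrix $\shmtx_{<\msc}^{\T}\shmtx_{<\msc}$, which is a bordered matrix rather than a rank-one perturbation of the centered version; the claim goes through because the border vector has $o(1)$ norm, not because a one-sided Weyl-type inequality applies directly. Stating this explicitly would close the gap rather than gesturing at it.
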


\begin{proof}
Denote $\rsmtx_{\msc}=(\lambda\mI+\kmtx_{\msc})^{-1}$ and $\rsmtx_{>\msc}=(\lambda\mI+\kmtx_{>\msc})^{-1}$.
By Lemma \ref{lem:samplecov_spectral_norm}, we have $\|\frac{1}{n}\shmtx_{\msc}\shmtx_{\msc}^{\T}\|\stodom1$,
so
\begin{align*}
\|\frac{1}{\sqrt{n}}\mM\shmtx_{<\msc}\| & \leq\|\mM\|\cdot\|\frac{1}{\sqrt{n}}\shmtx_{<\msc}\|\\
 & \lesssim1
\end{align*}
and we can also get $1\lesssim\lambda_{\min}(\rsmtx_{\msc})$. On
the other hand, by (\ref{eq:lem:spectral_norm_1_domilbd}) we also
have $1\lesssim\lambda_{\min}(\frac{1}{n}\shmtx_{<\msc}^{\T}\shmtx_{<\msc})$,
since $n/\usdim{<\msc}\gtrsim\log(\usdim{<\msc})^{2}$. As a result,
\begin{align*}
\|(\mD_{<\msc}^{-2}+\frac{1}{n}\shmtx_{<\msc}^{\T}\rsmtx_{\msc}\shmtx_{<\msc})^{-1}\| & \leq\|(\frac{1}{n}\shmtx_{<\msc}^{\T}\rsmtx_{\msc}\shmtx_{<\msc})^{-1}\|\\
 & \leq[\lambda_{\min}(\rsmtx_{\msc})\cdot\lambda_{\min}(\frac{1}{n}\shmtx_{<\msc}^{\T}\shmtx_{<\msc})]^{-1}\\
 & \stodom1,
\end{align*}
where the last step follows from $1\lesssim\lambda_{\min}(\rsmtx_{\msc})$
and $1\lesssim\lambda_{\min}(\frac{1}{n}\shmtx_{<\msc}^{\T}\shmtx_{<\msc})$.

Finally, we verify that both $\rsmtx_{\msc}$ and $\rsmtx_{>\msc}$
satisfy (\ref{eq:spectral_lbd_ubd}). Clearly, $\lambda_{\max}(\rsmtx_{\msc}),\lambda_{\max}(\rsmtx_{>\msc})\leq\frac{1}{\lambda}\lesssim1$.
On the other hand, since $\|\frac{1}{n}\shmtx_{\msc}\shmtx_{\msc}^{\T}\|\stodom1$
by Lemma \ref{lem:samplecov_spectral_norm} and $\delta_{\msc}\lesssim1$,
we get $1\lesssim\lambda_{\min}(\rsmtx_{\msc})$. For $\lambda_{\min}(\rsmtx_{>\msc})$,
we can apply (\ref{eq:higher_order_norm_bd_Mpapaer_3}) and obtain
that $\|\mK_{>\msc}\|\stodom2\sum_{k>\msc}\kcoeff k\lesssim1$, which
implies that $1\lesssim\lambda_{\min}(\rsmtx_{>\msc})$.
\end{proof}

%\section{Concentration Results}
%\begin{lem}
%\label{lem:concentration_Yk_probbound}For any $k\in\mathbb{Z}^{+}$,
%$\veps>0$ and $t\geq2^{\frac{k}{2-\veps}}$, we have
%\begin{equation}
%\P(|Y_{k}(\vx)|\geq t)\leq\exp\big(-\frac{\veps}{2}t^{\frac{2-\veps}{k}}\log t\big)\label{eq:concentration_Yk_probbound}
%\end{equation}
%\end{lem}
%
%\begin{proof}
%By Theorem 1 of \cite{beckner1992sobolev}, we have for any $q\geq2$
%and $k\in\mathbb{Z}^{+}$
%\begin{align*}
%\E|Y_{k}(\vx)|^{q} & \leq(q-1)^{\frac{kq}{2}}\E[Y_{k}(\vx)^{2}]^{q/2}\\
% & \leq q^{\frac{kq}{2}}
%\end{align*}
%Then by Markov inequality, for any $q\geq2$ and $k\in\mathbb{Z}^{+}$,
%\begin{align}
%\P(|Y_{k}(\vx)|\geq t) & \leq\frac{\E|Y_{k}(\vx)|^{q}}{t^{q}}\nonumber \\
% & \leq e^{q(\frac{k}{2}\log q-\log t)}\label{eq:Yk_x_probbound_1}
%\end{align}
%If we choose $q=t^{\frac{2-\veps}{k}}$, then for $t\geq2^{\frac{k}{2-\veps}}$,
%we have $q\geq2$ and substituting this $q$ into (\ref{eq:Yk_x_probbound_1}),
%we have
%\[
%\P(|Y_{k}(\vx)|\geq t)\leq\exp\big(-\frac{\veps}{2}t^{\frac{2-\veps}{k}}\log t\big).
%\]
%\end{proof}

% \section{Auxiliary Results for Computing Training Error}

\input{\string"apdx_Training_Error_Approximation.tex\string"}

\input{\string"apdx_Test_Error_Approximation.tex\string"}

\section{Concentration of a Quadratic Form}
\begin{lem}
\label{lem:Fkl_sq_bound}For any finite integer $k,\ell\geq0$, let
\[
F_{k,\ell}(\vx)=\mY_{k,\symset}(\vx)^{\T}\mM\mY_{\ell,\symset}(\vx),
\]
where $\mY_{k,\symset}$ is defined in (\ref{eq:SH_symmetric}), $\vx\sim\spmeasure{d-1}$
and $\mM\in\R^{m\times p}$ is a deterministic matrix, with $m={d \choose k}$
and $p={d \choose \ell}$. It holds that
\begin{equation}
\E_{\vx}F_{k,\ell}(\vx)^{2}\leq C_{k,\ell}\min\{N_{k},N_{\ell}\}\|\mM\|_{F}^{2},\label{eq:Fkl_sq_bound}
\end{equation}
where $C_{k,\ell}>0$ is some constant that only depends on $k$ and $\ell$.
\end{lem}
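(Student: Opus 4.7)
The plan is to prove this by an asymmetric Cauchy--Schwarz argument. Since the statement is symmetric in the roles of $(k,\ell)$, it suffices to prove $\E_{\vx} F_{k,\ell}(\vx)^2 \leq C'_{k,\ell} N_k \|\mM\|_F^2$; the matching bound with $N_\ell$ follows by exchanging the two factors, and taking the minimum gives \eqref{eq:Fkl_sq_bound}.

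The first step is to view $F_{k,\ell}(\vx) = \langle \mY_{k,\symset}(\vx),\, \mM\mY_{\ell,\symset}(\vx)\rangle$ and apply Cauchy--Schwarz pointwise in $\vx$:
\begin{equation*}
F_{k,\ell}(\vx)^2 \;\leq\; \|\mY_{k,\symset}(\vx)\|^2 \cdot \|\mM\mY_{\ell,\symset}(\vx)\|^2.
\end{equation*}
The point of doing this asymmetrically is that the first factor will be controlled by a \emph{deterministic} bound proportional to $N_k$, while the second factor will absorb the Frobenius norm of $\mM$ after taking expectation.

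For the first factor, I would use the pointwise norm identity for the symmetric basis $\mY_{k,\symset}$ defined in \eqref{eq:SH_symmetric} to conclude that $\|\mY_{k,\symset}(\vx)\|^2 \leq C_{k}\, N_k$ uniformly over $\vx \in \dsphere{d}$. For the standard orthonormal spherical harmonic basis this would follow with constant $1$ directly from the addition formula \eqref{eq:diagonal_Y_constant}, and for the symmetric basis an analogous estimate holds because $\mY_{k,\symset}$ is related to $\mY_k$ by a fixed (dimension-independent up to a $k$-dependent constant) linear transform, or, equivalently, by the Maclaurin-type inequality $e_k(\vx \odot \vx) \leq \binom{d}{k}$ applied on $\sum_i x_i^2 = d$. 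Pulling this pointwise bound out and taking expectation gives
\begin{equation*}
\E_{\vx} F_{k,\ell}(\vx)^2 \;\leq\; C_{k} N_k \cdot \E_{\vx}\|\mM\mY_{\ell,\symset}(\vx)\|^2 \;=\; C_{k} N_k \cdot \Tr\!\bigl(\mM\,\mSig_{\ell}\,\mM^{\T}\bigr),
\end{equation*}
where $\mSig_{\ell} := \E\, \mY_{\ell,\symset}(\vx)\mY_{\ell,\symset}(\vx)^{\T}$. The second ingredient is the operator-norm bound $\|\mSig_{\ell}\| \leq C'_{\ell}$: off-diagonal entries of $\mSig_{\ell}$ vanish by the coordinate sign-flip symmetry of $\spmeasure{d-1}$, and the diagonal entries are moments of the form $\E \prod_{i\in S} x_i^2$, which by \eqref{eq:moments_of_usp} equal $\prod_{j=0}^{|S|-1}(1+2j/d)^{-1}$ and are uniformly bounded in $d$ for each fixed $|S|=\ell$. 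Combining the two bounds yields $\Tr(\mM\mSig_\ell\mM^{\T}) \leq C'_\ell \|\mM\|_F^2$, and hence $\E F_{k,\ell}^2 \leq C_{k}C'_{\ell} N_k \|\mM\|_F^2$, completing the argument.

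The main obstacle is the pointwise norm bound $\|\mY_{k,\symset}(\vx)\|^2 \leq C_k N_k$ uniformly over the sphere: unlike the covariance bound, which is a simple second-moment computation, this requires a worst-case estimate that genuinely depends on the specific construction of the symmetric basis in \eqref{eq:SH_symmetric}. The covariance bound in the other step, by contrast, is a routine moment calculation using the explicit formula \eqref{eq:moments_of_usp} and the symmetry of $\spmeasure{d-1}$.
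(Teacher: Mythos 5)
Your approach is correct, and it is genuinely different from the paper's. The paper first reduces from the sphere to $\vx \sim \mathcal{N}(\boldsymbol{0},\mI_d)$ (via homogeneity of $\mY_{k,\symset}$ and Gaussian norm concentration), and then proceeds by a double induction on $(k,\ell)$: the base case is trivial, the mean $\E F_{k,\ell}$ is handled directly (nonzero only when $k=\ell$), and the variance is controlled by the Gaussian Poincar\'{e} inequality, with $\nabla F_{k,\ell}$ expressed in terms of $F_{k-1,\ell}$ and $F_{k,\ell-1}$ to close the induction. You instead stay on the sphere and apply pointwise Cauchy--Schwarz, $F_{k,\ell}(\vx)^2 \le \|\mY_{k,\symset}(\vx)\|^2\,\|\mM\mY_{\ell,\symset}(\vx)\|^2$, then bound the first factor deterministically and the second in expectation. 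Your pointwise bound is sound: since $\mY_{k,\symset}(\vx)$ consists of the $\upsilon_k$-scaled square-free degree-$k$ monomials $\upsilon_k\prod_{i\in S}x_i$ (as is confirmed by the derivative identity $\partial_{x_i}\mY_{k,\symset}=\tfrac{\upsilon_k}{\upsilon_{k-1}}$-selector of $\mY_{k-1,\symset}$ used in the paper), one has $\|\mY_{k,\symset}(\vx)\|^2=\upsilon_k^2\,e_k(x_1^2,\dots,x_d^2)\le\upsilon_k^2\binom{d}{k}$ on the sphere by Maclaurin/AM--GM, and $\upsilon_k=O(1)$, $\binom{d}{k}\asymp N_k$. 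The covariance step is also sound: sign-flip symmetry diagonalizes $\mSig_\ell$, and the diagonal entries $\upsilon_\ell^2\prod_{j<\ell}(1+2j/d)^{-1}$ are $O(1)$, so $\Tr(\mM\mSig_\ell\mM^\T)\le C_\ell\|\mM\|_F^2$. Your argument is shorter and avoids both the Gaussian reduction and the induction, but it exploits the hard spherical constraint $\sum_i x_i^2=d$ in an essential way (the pointwise bound fails for Gaussian $\vx$, where $e_k$ is unbounded), whereas the paper's Poincar\'{e}-based argument transfers the problem to the Gaussian setting where the induction can be run cleanly; each approach is self-contained and yields the stated bound.
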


\begin{proof}
Note that since $\vx$ can be represented as: $\vx=\frac{\vtheta}{\|\vtheta\|/\sqrt{d}}$,
where $\vtheta\sim\mathcal{N}(\boldsymbol{0},\mI_{d})$ and by concentration
of norm of Gaussian vector [\emph{e.g.}, Theorem 3.1.1 in \cite{vershynin2018high}],
there exists $c>0$, such that for any $d\in\mathbb{Z}^{+},$$\P(\|\vtheta\|/\sqrt{d}<1/2)\leq e^{-cd}$.
Hence, for $\vx=\frac{\vtheta}{\|\vtheta\|/\sqrt{d}}$, $\vtheta\sim\mathcal{N}(\boldsymbol{0},\mI_{d})$,
we can get
\[
\mY_{k,\symset}(\vx)^{\T}\mM\mY_{\ell,\symset}(\vx)\lesssim\mY_{k,\symset}(\vtheta)^{\T}\mM\mY_{\ell,\symset}(\vtheta).
\]
Therefore, it suffices to prove (\ref{eq:Fkl_sq_bound}) for $\vx\sim\mathcal{N}(\boldsymbol{0},\mI_{d})$.
In the following, we set $\vx\sim\mathcal{N}(\boldsymbol{0},\mI_{d})$. We
will prove (\ref{eq:Fkl_sq_bound}) by induction. When $k=\ell=0$,
we trivially have $\E[Y_{0,\symset}(\vx)^{\T}M Y_{0,\symset}(\vx)]^{2}=M^{2}$.
Now suppose we have shown for any $\mM_{1}\in\R^{m_{1}\times p_{1}}$
and $\mM_{2}\in\R^{m_{2}\times p_{2}}$, with $m_{1}={d \choose k-1}$,
$p_{1}={d \choose \ell}$ and $m_{2}={d \choose k}$ and $p_{2}={d \choose \ell-1}$,
it holds that
\begin{align}
\E[F_{k-1,\ell}(\vx)]^{2} & \leq C_{k-1,\ell}\min\{N_{k-1},N_{\ell}\}\|\mM_{1}\|_{F}^{2},\label{eq:Fkl_sq_bound_start1}\\
\E[F_{k,\ell-1}(\vx)]^{2} & \leq C_{k,\ell-1}\min\{N_{k},N_{\ell-1}\}\|\mM_{2}\|_{F}^{2}.\label{eq:Fkl_sq_bound_start2}
\end{align}
Based on these two bounds, we are going to show for any $\mM\in\R^{m\times p}$,
with $m={d \choose k}$, $p={d \choose \ell}$, 
\begin{equation}
\E[F_{k,\ell}(\vx)]^{2}\leq C_{k,\ell}\min\{N_{k},N_{\ell}\}\|\mM\|_{F}^{2}.\label{eq:Fkl_sq_bound_end}
\end{equation}
In particular, if $k=0$ (or $\ell=0$), we just use (\ref{eq:Fkl_sq_bound_start2})
or {[}(\ref{eq:Fkl_sq_bound_start1}){]} for the induction.

We will bound $|\E F_{k,\ell}(\vx)|$ and $\text{Var}F_{k,\ell}(\vx)$,
separately. The expectation $\E F_{k,\ell}(\vx)$ is easy to deal
with. When $k\neq\ell$, $\E F_{k,\ell}(\vx)=0$; when $k=\ell$,
\begin{align*}
\E F_{k,k}(\vx) & =\E\text{Tr}[\mM\mY_{k,\symset}(\vx)\mY_{k,\symset}(\vx)^{\T}]\\
 & =\upsilon_{k}^{2}\text{Tr}\mM
\end{align*}
where $\upsilon_{k}$ is the normalizing constant defined in (\ref{eq:SH_symmetric}).
Therefore,
\begin{align}
|\E F_{k,k}(\vx)|^{2} & \leq\upsilon_{k}^{4}\big(\sum_{u=1}^{m}|M_{uu}|\big)^{2}\nonumber \\
 & \leq C_k N_{k}(\sum_{u=1}^{m}M_{uu}^{2})\nonumber \\
 & \leq C_k N_{k}\|\mM\|_{F}^{2}\label{eq:abs_meanEF_bd_2}
\end{align}
where $C_k$ is a constant that only depends on $k$. Next, we compute the variance of $F_{k,\ell}(\vx)$. Taking derivative
of $F_{k,\ell}(\vx)$ with respect to each $x_{i}$, we have
\[
\frac{\partial F_{k,\ell}(\vx)}{\partial x_{i}}=\frac{\upsilon_{k}}{\upsilon_{k-1}}\mY_{k-1,\symset}(\vx)^{\T}\text{\ensuremath{\mM}}_{i}^{\text{row}}\mY_{\ell,\symset}(\vx)+\frac{\upsilon_{\ell}}{\upsilon_{\ell-1}}\mY_{k,\symset}(\vx)^{\T}\text{\ensuremath{\mM}}_{i}^{\text{col}}\mY_{\ell-1,\symset}(\vx)
\]
where $\text{\ensuremath{\mM}}_{i}^{\text{row}}$ and$\text{\ensuremath{\mM}}_{i}^{\text{col}}$
are formed by concatenating a subset of rows (columns) from $\mM$
and some zero rows (columns). As a result,
\begin{align*}
\|\nabla F_{k,\ell}(\vx)\|^{2}
&\leq2\Big(\frac{\upsilon_{k}}{\upsilon_{k-1}}\Big)^{2}\sum_{i=1}^{d}[\mY_{k-1,\symset}(\vx)^{\T}\text{\ensuremath{\mM}}_{i}^{\text{row}}\mY_{\ell,\symset}(\vx)]^{2}\\
&\hspace{6em}+2\Big(\frac{\upsilon_{\ell}}{\upsilon_{\ell-1}}\Big)^{2}\sum_{i=1}^{d}[\mY_{k,\symset}(\vx)^{\T}\text{\ensuremath{\mM}}_{i}^{\text{col}}\mY_{\ell-1,\symset}(\vx)]^{2}.
\end{align*}
Then by (\ref{eq:Fkl_sq_bound_start1}) and (\ref{eq:Fkl_sq_bound_start2}),
we have
\begin{align*}
\E\|\nabla F_{k,\ell}(\vx)\|^{2} & \leq C_{k,\ell}\Big[\min\{N_{k-1},N_{\ell}\}\sum_{i=1}^{d}\|\text{\ensuremath{\mM}}_{i}^{\text{row}}\|_{F}^{2}+\min\{N_{k},N_{\ell-1}\}\sum_{i=1}^{d}\|\text{\ensuremath{\mM}}_{i}^{\text{col}}\|_{F}^{2}\Big]\\
 & \leq C_{k,\ell}\Big[k\min\{N_{k-1},N_{\ell}\}\|\mM\|_{F}^{2}+\ell\min\{N_{k},N_{\ell-1}\}\|\mM\|_{F}^{2}\Big]\\
 & \leq C_{k,\ell} \min\{N_{k},N_{\ell}\}\|\mM\|_{F}^{2}.
\end{align*}
Using Gaussian Poincar\'{e} inequality, we can get
\begin{align}
\text{Var}F_{k,\ell}(\vx) & \leq\E\|\nabla F_{k,\ell}(\vx)\|^{2}\nonumber \\
 & \leq C_{k,\ell}\min\{N_{k},N_{\ell}\}\|\mM\|_{F}^{2}.\label{eq:abs_varEF_bd_2}
\end{align}
Finally, combining (\ref{eq:abs_varEF_bd_2}) and (\ref{eq:abs_meanEF_bd_2}),
we get
\begin{align*}
\E[F_{k,\ell}(\vx)^{2}] & =[\E F_{k,\ell}(\vx)]^{2}+\text{Var}F_{k,\ell}(\vx)\\
 & \leq C_{k,\ell} \big(\delta_{k\ell}N_{k}\|\mM\|_{F}^{2}+\min\{N_{k},N_{\ell}\}\|\mM\|_{F}^{2}\big)\\
 & \leq2 C_{k,\ell}\min\{N_{k},N_{\ell}\}\|\mM\|_{F}^{2}
\end{align*}
which is (\ref{eq:Fkl_sq_bound_end}).
\end{proof}

% \section{Simulation}
% In Fig. \ref{fig:numerical_sim} we provide some numerical results corroborating the theoretical predictions given in Theorem \ref{thm:training_test_err}. We consider the quadratic and cubic scaling regimes. It can be seen that the theoretical prediction can capture the actual performance of kernel ridge regression. We can also find that the performance of linear regression in the equivalent Gaussian model mentioned in Sec. \ref{sec:Equivalence-Conjecture} is also very close to the original kernel ridge regression.
% \begin{figure}[htbp]
% \floatconts
% {fig:numerical_sim}
% {\caption{KRR in the teacher-student model with $\kernalfn(x)=\frac{x^3}{20}+\frac{x^2}{2}+x+1$ and $\teacherfn(x)=\frac{e^x}{3}$. We consider two different polynomial scalings: (a) $K=2$, (b) $K=3$. In both settings,  $\lambda=10^{-3}$. In (a), we set $d=100$ and in (b), we set $d=50$.  Here, the ``cross'' represents the empirical performance of the original kernel ridge regression [c.f. \eqref{eq:kernel_1}], the ``circle'' represents the performance under the Gaussian equivalent model mentioned in Sec. \ref{sec:Equivalence-Conjecture} and the two curves correspond to the theoretical predictions given in Theorem \ref{thm:training_test_err}.} }
% {
%     \subfigure{
%     \label{fig:K_2}
%     \includegraphics[width=0.46\textwidth]{}
%     }\quad
%     \subfigure{
%     \label{fig:K_3}
%     \includegraphics[width=0.45\textwidth]{}
%     }
% }
% \end{figure}

\end{document}